\newtheorem{theorem}{Theorem}
\newtheorem{remark}{Remark}
\newcommand{\ee}{\end{equation}}
\newcommand{\bea}{\begin{equation*}\begin{aligned}}
\newcommand{\eea}{\end{aligned}\end{equation*}}
\newcommand{\R}{\mathbb{R}}
\newcommand{\wh}{\widehat}
\newcommand{\mc}{\mathcal}
\newcommand{\mbb}{\mathbb}
\newcommand{\Pnom}{\wh{\mbb P}}
\newcommand{\QQ}{\mbb Q}
\newcommand{\st}{\mathrm{s.t.}}
\newcommand{\PSD}{\mathbb{S}_{+}} % the set of positive semi-definite matrices of dimension p
\newcommand{\Let}{\triangleq}%https://www.overleaf.com/project/5ea9d406e0687e00012e28ab
\newcommand{\opt}{^\star}
\newcommand{\half}{\frac{1}{2}}
\newcommand{\m}{\mu}
\newcommand{\cov}{\Sigma}
\newcommand{\msa}{\wh \m}
\newcommand{\covsa}{\wh \cov}
\title{Distributional Surgery for Language Model Activations}
\author{
  Bao Nguyen\\
  CUHK \\
  \texttt{nbnguyen@se.cuhk.edu.hk} \\\And
  Binh Nguyen \\
  VinUniversity \\
  \texttt{binh.nt2@vinuni.edu.vn} \\\AND
  Duy Nguyen \\
  UNC-Chapel Hill \\
  \texttt{duykng@cs.unc.edu} \\\And
  Viet Anh Nguyen  \\
  CUHK \\
  \texttt{nguyen@se.cuhk.edu.hk} \\}
\begin{document}
\maketitle
\begin{abstract}
Language models, while capable of generating remarkably coherent and seemingly accurate text, can occasionally produce undesirable content, including harmful or toxic outputs. In this paper, we present a new two-stage approach to detect and mitigate undesirable content generations by rectifying activations. First, we train an ensemble of layerwise classifiers to detect undesirable content using activations by minimizing a smooth surrogate of the risk-aware score. Then, for detected undesirable contents, we propose layerwise distributional steering policies that transform the attention heads. These policies are computed through principled semidefinite programming, which aims to minimally perturb the attention distribution while probabilistically guaranteeing the effectiveness of the editions. Empirical evaluations across multiple language models and datasets show that our method outperforms baselines in reducing the generation of undesirable output. 
\end{abstract}

\section{Introduction}

Language models (LMs) have demonstrated a remarkable ability to understand and generate human-like documents \cite{ref:radford2019language,ref:brown2020language,ref:touvron2023llama,ref:touvron2023llama2,ref:jiang2023mistral,dubey2024llama}. However, inspection of their output often reveals undesirable generation, including inaccurate or toxic texts \cite{ref:ji2023survey,ref:rawte2023troubling,ref:xu2024hallucination}. Meanwhile, developing effective strategies to control the generation process of LMs remains a significant challenge \cite{ref:tonmoy2024comprehensive}.

Researchers have proposed numerous methods for controllable text generation in language models \cite{ref:zhang2023survey,ref:li2024pre}. These approaches primarily include model editing and supervised fine-tuning. Both methods, however, require altering the model weights using a subset of text samples, which can result in unstable representations for other text instances \cite{ref:hase2024does}. Additionally, these approaches typically require substantial computational resources.

To address these limitations, \emph{activation intervention} has emerged as a promising alternative for controllable text generation \cite{ref:subramani2022extracting,ref:hernandez2023measuring,ref:li2024inference}. This approach involves altering the model activations responsible for undesirable output during inference. Previous research has identified interpretable directions within the activation space of language models that play a causal role during inference. Studies by \citet{ref:burns2022discovering} and \citet{ref:moschella2023relative} suggest that these directions can be manipulated to adjust model behavior in a controlled manner. This body of work suggests that the internal representations of language models are structured in a way that enables fine-grained control over generated text.

Drawing inspiration from these findings, activation intervention frameworks operate on the premise that the information needed to guide the model toward generating a target sentence is \emph{already encoded within the model}. The hidden information is extracted as latent vectors and then used to steer the generation toward producing desirable outputs. The preliminary success of these activation intervention methods motivates our approach to improving the quality and controllability of language model generation.

\textbf{Problem Statement.} We consider a language model consisting of $L$ layers, each layer has $H$ heads, each head has dimension $d$. For example, for Llama-2, we have $L = 32$, $H = 32$, and $d = 128$. The training dataset is denoted by $\mathcal D = (x_i, y_i^*)_{i = 1, \ldots, N}$, the $i$-th text is denoted by $x_i$, and its ground truth label is $y_i^* \in \{0, 1\}$, where the label 1 (positive) represents the \textit{un}desirable text, and the label 0 (negative) represents the desirable text. Our goal is twofold: (i) detect an undesirable text, and (ii) modify an undesirable text into a desirable text.

The activations for a text $x_i$ at layer $\ell \in \{1, \ldots, L\}$ is denoted by $a_{\ell, i}$. The activation at layer $\ell + 1$ is the output of the operation:
\begin{equation} \label{eq:ff}
    \begin{aligned}
    a_{\ell + 1, i} &= a^{\mathrm{mid}}_{\ell, i} + \mathrm{FFN}(a^{\mathrm{mid}}_{\ell, i}), \\
    a^{\mathrm{mid}}_{\ell, i} &= a_{\ell, i} + \sum_{i=1}^H Q_{\ell h}\textrm{Att}(P_{\ell h} a_{\ell, i}).        
    \end{aligned}
\end{equation}
Here, $P_{\ell h} \in \mathbb{R}^{d \times dH}$ is the projection matrix that maps the output of each layer to the $d$ dimensional head space, $\mathrm{Att}$ is the attention operator~\cite{ref:vaswani2017attention}, $Q_{\ell h} \in \mathbb{R}^{dH \times d}$ is the pull-back matrix and $\mathrm{FFN}$ is the feed-forward layer. Each $a_{\ell, i}$ is a concatenation of headwise activations $a_{\ell h, i}$ for $h = 1, \ldots, H$. Inspired by~\citet{ref:li2024inference}, we aim to perform intervention at \emph{some selected $a_{\ell h, i}$, the activations for head $h$ of layer $\ell$}, if we detect that the activation is from undesirable content.

\noindent\textbf{Contributions.} We contribute an activation intervention method to detect and rectify the undesirable generation of LM. We call our method RADIANT (\textbf{R}isk-\textbf{A}ware \textbf{D}istributional \textbf{I}ntervention Policies for Language Models' \textbf{A}ctivatio\textbf{n}s). RADIANT comprises two components:
\begin{enumerate}[leftmargin=5mm]
    \item A layerwise probe: at each layer, we train a classifier to detect undesirable content from the layer's activations. We train a risk-aware logistic classifier for each head that balances the false positive and false negative rates. Then, we aggregate these headwise classifiers' predictions using a voting mechanism to form a layerwise classifier. We then identify one layer where the probe delivers the most reasonable predictive performance. This optimal classifier serves as the detector of undesirable content.
    \item A collection of headwise interventions: given the optimal layer for the layerwise probe found previously, we find for each head in that layer an optimal headwise intervention policy. We choose a simple linear map for this intervention policy that minimizes the magnitude of editing while delivering sufficient distributional guarantees that the undesirable-predicted activations will be edited into desirable-predicted activations. We show that this linear map can be computed efficiently using semidefinite programming.
\end{enumerate}

\subsection{Related Works}

\textbf{Controllable generation.} Controllable text generation methods aim to alter the outputs of large language models in a desired way. One possible approach is model editing~\cite{ref:wang2023knowledge, ref:zhang2024comprehensive}, which involves modifying the parameters of a model to steer its outputs. For example,~\citet{ref:meng2022locating} involves identifying specific middle-layer feedforward modules that correspond to factual knowledge and then altering these weights to correct or update the information encoded by the model. Other notable methods include fine-tuning techniques such as Supervised Fine-Tuning (SFT,~\citealt{ref:peng2023instruction, ref:gunel2020supervised}) and Reinforcement Learning from Human Feedback (RLHF,~\citealt{ref:ouyang2022training, ref:griffith2013policy}).

\noindent\textbf{Probing.} Probing is a well-established framework to assess the interpretability of neural networks~\cite{ref:alain2016understanding,ref:belinkov2022probing}. Probing techniques have been applied to understand the internal representations of transformer architectures in language models such as BERT and GPT. For example,~\citet{ref:burns2022discovering} proposed an unsupervised probing method that optimizes consistency between positive and negative samples.~\citet{ref:marks2023geometry} computes the mean difference between true and false statements and skews the decision boundary by the inverse of the covariance matrix of the activations.

\noindent\textbf{Activation interventions.} Activation intervention at inference time is an emerging technique for controllable generation \cite{ref:turner2023activation,ref:li2024inference,singhrepresentation,yin2024lofit}. Unlike model editing or fine-tuning techniques, inference time intervention does not require altering the model parameters.~\citet{ref:li2024inference} proposed a headwise intervention method for eliciting truthful generated answers of a language model. They first train linear probes on each head of the language model, then shift the activations with the probe weight direction or the mean difference direction. 

There is a clear distinction between our method and ITI when choosing the location of the classifiers and, hence, the location of the interventions. The ITI method builds different headwise classifiers scattered at \textit{different} layers, and it may suffer from distribution shifts: if an activation is intervened, this leads to shifts in the activation values at all subsequent layers in the network. Thus, classifiers trained at subsequent layers can degrade performance, and interventions at subsequent layers can also degrade. On the contrary, we build a layerwise classifier focusing on all heads in the \textit{same} layer and do not suffer from the distributional shifts of the activations.

The recent paper by \citet{singhrepresentation} is closely related to our work. The authors propose a heuristic intervention rule; then, using empirical estimations of the means and covariances of activations data's distributions of desirable and undesirable text, they calculate a closed-form optimal transport plan between these two empirical distributions, assuming they are standard normal. However, this framework does not take into account the semantics of sentences. Another recent method, called LoFit (Localized Fine-Tuning on LLM Representations~\citealt{yin2024lofit}), also identifies a specific subset of attention heads that are crucial for learning a particular task, but then performs fine-tuning on the intervention vectors at those chosen heads to enhance the model's hidden representations. This results in additional training overhead.

\section{Layerwise Risk-aware Probes} \label{sec:probe}

In the first step, we aim to find a classifier $\mathcal C_{\ell h}: \R^d \to \{0, 1\}$ for each head $h = 1, \dots, H$ at each layer $\ell = 1, \ldots, L$ to classify the activation value $a_{\ell h}$ of desirable and undesirable texts. We propose using a linear logistic classifier, parameterized by a slope parameter $\theta_{\ell h} \in \R^d$ and a bias parameter $\vartheta_{\ell h} \in \R$. The headwise classification rule is 
\begin{equation*}
\begin{aligned}
  \mathcal C_{\ell h}(a_{\ell h}) &= \begin{cases}
    1 & \text{if } \mathrm{sigmoid}(\vartheta_{\ell h} + \theta_{\ell h}^\top a_{\ell h}) \ge 0.5, \\
    0 & \text{otherwise,}
    \end{cases}\\
    &= 
    \begin{cases}
    1 & \text{if } \vartheta_{\ell h} + \theta_{\ell h}^\top a_{\ell h} \ge 0, \\
    0 & \text{if } \vartheta_{\ell h} + \theta_{\ell h}^\top a_{\ell h} < 0.
    \end{cases}      
\end{aligned}
\end{equation*}

The training process of $\mc C_{\ell h}$ must take into account two types of risk: (i) false-negative risk when an undesirable text is not detected, (ii) false-positive risk when a desirable text is classified as undesirable, and is subsequently edited and loses its original semantics. Therefore, a natural candidate for the loss function is a combination of the False Positive Rate (FPR) and the False Negative Rate (FNR).  However, neither FPR nor FNR have smooth functions in optimizing variables. We, hence, resort to smooth surrogates of these two metrics that use the predicted probability of the classifier, similarly to~\citet{ref:benedict2022sigmoidf1}. In detail, we use
\begin{align*}
    &\mathrm{FPR}(\theta_{\ell h},\vartheta_{\ell h}) \\
    &= \frac{1}{N_0}         \sum_{i=1}^N  \mathrm{sigmoid}(\vartheta_{\ell h} + \theta_{\ell h}^\top a_{\ell h, i}) \times (1 - y^*_i), \\
    &\mathrm{FNR}(\theta_{\ell h},\vartheta_{\ell h}) \\
    &= \frac{1}{N_1}\sum_{i=1}^N \left(1 - \mathrm{sigmoid}(\vartheta_{\ell h} + \theta_{\ell h}^\top a_{\ell h, i})\right) \times y^*_i.
\end{align*}
The linear probe training loss is thus
\begin{equation} 
    \label{eq:classifier}
    \min_{\theta_{\ell h} \in \R^d,~\vartheta_{\ell h} \in \R}~\mathrm{FPR}(\theta_{\ell h},\vartheta_{\ell h}) + \alpha \mathrm{FNR}(\theta_{\ell h},\vartheta_{\ell h}), 
\end{equation}
for some positive weight parameters $\alpha$. A higher value of $\alpha$ will emphasize achieving a lower false negative rate, which is critical for detecting undesirable inputs. Problem~\eqref{eq:classifier} has a smoothed surrogate loss that is differentiable and can be solved using a gradient descent algorithm. Finally, we aggregate $\{ \mathcal C_{\ell h} \}_{h = 1, \ldots, H}$ into a single classifier $\mathcal C_{\ell}$ for layer $\ell$ by a simple voting rule
\[
    \mathcal C_{\ell}(a_\ell) = \begin{cases}
        1 &\text{if } \sum_{h = 1}^H \mathcal C_{\ell  h}(a_{\ell h}) \ge \tau, \\
        0 & \text{otherwise,}
    \end{cases}
\]
where $\tau \in [0, H]$ is a tunable threshold. When $\tau = \lfloor H/2 \rfloor$, then $\mathcal C_{\ell}$ becomes the majority voting results of the individual (weak) classifiers $\mathcal C_{\ell h}$. We optimize the hyperparameter $\tau$ to reduce the False Negative Rate (FNR), with a secondary focus on the False Positive Rate (FPR) in cases where the FNR rates are equal. The rationale for this choice is that we believe undesirable content being labeled as desirable is more problematic than other instances.

To conclude this step, we compute the classifier $\mathcal C_\ell$ for layer $\ell = 1, \ldots, L$ by tuning the parameters $\alpha$. The layer whose classifier $\mathcal C_{\ell}$ delivers the highest quality (accuracy or any risk-aware metric) will be the optimal layer to construct the probe. This optimal layer, along with the collection of headwise classifiers, is the final output of this step.

\section{Headwise Interventions with Probabilistic Guarantees} \label{sec:OT-intervene}

We propose a distributional intervention to the activations of the samples predicted as undesirable by the layerwise classifier. In this section, we will focus on constructing a single headwise intervention, and in the next section, we will combine multiple headwise interventions into a layerwise intervention. A headwise intervention is a map $\Delta_{\ell h}: a_{\ell h} \mapsto \hat a_{\ell h}$ that needs to balance multiple criteria: (i) it should be easy to compute and deploy, (ii) it should be effective in converting the undesirable activations to the desirable regions, (iii) it should minimize the magnitude of the intervention to sustain the context of the input. Intuitively, we propose solving an optimization problem with the loss and constraints that fit all the criteria listed. The details are as follows.

To promote (i), we employ a simple linear map $\Delta_{\ell h}( a_{\ell h}) = G_{\ell h} a_{\ell h} + g_{\ell h}$ parametrized by a matrix $G_{\ell h} \in \R^{d \times d}$ and a vector $g_{\ell h} \in \R^d$. This linear map can also be regarded as a pushforward map that transforms the \textit{un}desirable-predicted activations to become desirable-predicted activations. Let us now represent the \textit{un}desirable-predicted activations as a $d$-dimensional random vector $\tilde a_{\ell h}$. Its distribution can be estimated using the training data after identifying the subset $\hat{\mathcal D}_{\ell h}^+$ of training samples that are \emph{predicted undesirable} by $\mathcal C_{\ell h}$, that is, $\hat{\mathcal D}_{\ell h}^+ \Let \{i: \mathcal C_{\ell h}(a_{\ell h, i}) = 1 \}$. The activations of samples in $\hat{\mathcal D}_{\ell h}^+$ lead to an empirical distribution $\Pnom_{\ell h}$. The linear map $\Delta_{\ell h}$ will pushforward the distribution $\Pnom_{\ell h}$ to the new distribution $\QQ_{\ell h} = \Delta_{\ell h} \# \Pnom$.

Using the pushforward distribution $\QQ_{\ell h}$, we can impose criteria (ii) and (iii) above in an intuitive method. To promote (ii), we require that the activations distributed under $\QQ_{\ell h}$ should be classified as desirable by $\mathcal C_{\ell h}$ with high probability. Finally, to promote (iii), we require that the distributions $\QQ_{\ell h}$ and $\Pnom_{\ell h}$ be not too far from each other. Let $\gamma \in (0, 0.5)$ be a small tolerance parameter, and let $\varphi$ be a measure of dissimilarity between probability distributions, we propose to find $\Delta_{\ell h}$ by solving the following stochastic program
\begin{equation} \label{eq:intervention}
    \begin{array}{cl}
        \min & \varphi(\Pnom_{\ell h}, \QQ_{\ell h}) \\
        \st & \QQ_{\ell h}(\tilde a~\text{classified by $\mathcal C_{\ell h}$ as 0}) \ge 1- \gamma,\\
        & \QQ_{\ell h} = \Delta_{\ell h} \# \Pnom_{\ell h}.
    \end{array}
\end{equation}

Problem~\eqref{eq:intervention} is easier to solve in specific circumstances. For example, when we impose that both $\Pnom_{\ell h}$ and $\QQ_{\ell h}$ are Gaussian and when we choose $\varphi$ as a moment-based divergence, then $\Delta_{\ell h}$ can be obtained by solving a convex optimization problem. In the next result, we use $\| \cdot \|_F$ as the Frobenius norm of a matrix, and $\Phi$ as the cumulative distribution function of a standard Gaussian distribution.

\begin{theorem}[Optimal headwise intervention]\label{thm:intervene}
    Suppose that $\Pnom_{\ell h} \sim \mc N(\msa, \covsa)$ and $\QQ_{\ell h} \sim \mc N (\mu, \cov)$ and $\varphi$ admits the form
    \[
    \varphi (\Pnom_{\ell h}, \QQ_{\ell h}) = \| \m - \msa \|_2^2 + \| \cov^\half - \covsa^{\half} \|_F^2.
    \]
    Let $(\m\opt, S\opt, t\opt)$ be the solution of the following semidefinite program
    \begin{equation} \label{eq:sdp}
        \begin{array}{cl}
             \min & \| \m - \msa \|_2^2 + \| S - \covsa^{\half} \|_F^2 \\
             \st & \vartheta_{\ell h} + \theta_{\ell h}^\top \mu + \Phi^{-1}(1-\gamma) t \le 0 \\
             & \| S \theta_{\ell h} \|_2 \le t \\
             & \mu \in \R^d,~S \in \PSD^d,~t \in \R_+.
        \end{array}
    \end{equation}
    Then, by defining $G\opt_{\ell h} = \covsa^{-\half} \big( \covsa^{\half} (S\opt)^2 \covsa^{\half} \big)^\half \covsa^{-\half}$ and $g\opt_{\ell h} = \mu\opt - G\opt_{\ell h}\msa$, a linear map $\Delta_{\ell h}$ that solves~\eqref{eq:intervention} is
    \[
    \Delta_{\ell h}(a_{\ell h}) = G\opt_{\ell h} a_{\ell h} + g\opt_{\ell h}.
    \]
\end{theorem}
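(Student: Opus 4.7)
The plan is to reduce the infinite-dimensional problem~\eqref{eq:intervention} over linear maps $\Delta_{\ell h}$ to a finite-dimensional one over the Gaussian parameters of the target $\QQ_{\ell h}$, recognize the reduced problem as the semidefinite program~\eqref{eq:sdp} after the change of variables $S=\cov^{\half}$, and then invert the pushforward relation to recover $(G\opt_{\ell h},g\opt_{\ell h})$ from the SDP optimizers $(\m\opt,S\opt)$.

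First I would handle the probabilistic guarantee. Under $\QQ_{\ell h}=\mc N(\mu,\cov)$, the scalar $\vartheta_{\ell h}+\theta_{\ell h}^\top \tilde a$ is Gaussian with mean $\vartheta_{\ell h}+\theta_{\ell h}^\top \mu$ and variance $\theta_{\ell h}^\top \cov\theta_{\ell h}$. Since $\mc C_{\ell h}(a)=0$ is equivalent to $\vartheta_{\ell h}+\theta_{\ell h}^\top a<0$, applying $\Phi^{-1}$ to both sides of the probability inequality yields the standard Gaussian safety condition
\[
\vartheta_{\ell h}+\theta_{\ell h}^\top \mu+\Phi^{-1}(1-\gamma)\sqrt{\theta_{\ell h}^\top\cov\theta_{\ell h}}\le 0.
\]
Writing $S=\cov^{\half}\in\PSD^d$ turns the square root into $\|S\theta_{\ell h}\|_2$, and introducing a nonnegative epigraph variable $t\ge\|S\theta_{\ell h}\|_2$ produces exactly the two conic constraints of~\eqref{eq:sdp}. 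Because $\Phi^{-1}(1-\gamma)>0$ when $\gamma<0.5$ and the objective does not depend on $t$, any optimum tightens $t=\|S\theta_{\ell h}\|_2$, so this epigraph relaxation is exact.

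Next I would reformulate the objective and argue equivalence with~\eqref{eq:intervention}. Under $S=\cov^{\half}$ the divergence $\varphi$ becomes $\|\mu-\msa\|_2^2+\|S-\covsa^{\half}\|_F^2$, matching the SDP objective. Any linear map $\Delta_{\ell h}(a)=Ga+g$ induces the Gaussian pushforward $\mc N(G\msa+g,\,G\covsa G^\top)$, so minimization over $(G,g)$ is the same as minimization over $(\mu,\cov)$ provided every target covariance is reachable; under the mild assumption $\covsa\succ 0$ (which is generic once $|\hat{\mc D}_{\ell h}^+|>d$), every $\cov\in\PSD^d$ is attainable. Hence~\eqref{eq:sdp} has the same feasible set and optimal value as the Gaussian-restricted version of~\eqref{eq:intervention}.

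To finish, I would recover $(G\opt_{\ell h},g\opt_{\ell h})$ by matching moments of the pushforward with $\mc N(\m\opt,(S\opt)^2)$. Matching means forces $g\opt_{\ell h}=\m\opt-G\opt_{\ell h}\msa$ directly, while matching covariances requires $G\opt_{\ell h}\covsa(G\opt_{\ell h})^\top=(S\opt)^2$. Setting $A=\bigl(\covsa^{\half}(S\opt)^2\covsa^{\half}\bigr)^{\half}$, the proposed $G\opt_{\ell h}=\covsa^{-\half}A\covsa^{-\half}$ is symmetric, and a direct expansion gives $G\opt_{\ell h}\covsa G\opt_{\ell h}=\covsa^{-\half}A\covsa^{-\half}\covsa\covsa^{-\half}A\covsa^{-\half}=\covsa^{-\half}A^2\covsa^{-\half}=(S\opt)^2$, verifying the identity. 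The main obstacle is that the covariance equation admits infinitely many solutions $G$ (any right orthogonal rotation preserves it), so one must exhibit a canonical element; the symmetric PSD choice above is precisely the Monge optimal transport map between the two Gaussians, which is the natural and computable representative and completes the proof.
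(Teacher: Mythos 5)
Your proof follows the same route as the paper's: convert the Gaussian chance constraint into the deterministic second-order-cone constraint $\vartheta_{\ell h} + \theta_{\ell h}^\top \mu + \Phi^{-1}(1-\gamma)\sqrt{\theta_{\ell h}^\top \cov \theta_{\ell h}} \le 0$, substitute $S = \cov^{\half}$, introduce the epigraph variable $t$, and recover $(G\opt_{\ell h},g\opt_{\ell h})$ by matching the first two moments of the Gaussian pushforward. You additionally carry out the algebraic verification $G\opt_{\ell h}\covsa G\opt_{\ell h}=(S\opt)^2$, note the reachability condition $\covsa\succ 0$, and explain the non-uniqueness of $G$ (resolved by the symmetric Monge-map representative) -- details the paper compresses into ``one can verify through simple linear algebraic calculations.''
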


\begin{figure}[t]
\centering
\includegraphics[width=0.9\linewidth]{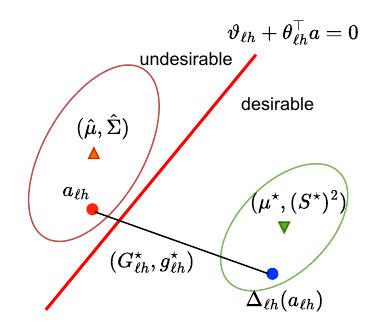}
\vspace{-5mm}
\caption{At head $h$ of layer $\ell$, we learn a headwise intervention, linear mapping $\Delta_{\ell h}$ to transform the \emph{un}desirable-predicted activations to desirable-predicted activations.}
\vspace{-5mm}
\label{fig:head_int}
\end{figure}

The proof of Theorem~\ref{thm:intervene} can be found in Appendix~\ref{sec:proof}. Figure~\ref{fig:head_int} illustrates the effect of the headwise intervention $\Delta_{\ell h}$: The headwise classifier $\mathcal C_{\ell h}$ is represented by the red linear hyperplane $\vartheta_{\ell h} + \theta_{\ell h}^\top a = 0$ on the activation space; the undesirable-predicted (label 1) region is towards the top left corner, while the desirable-predicted (label 0) region is towards the bottom right corner. The activations of the undesirable-predicted samples are represented as a Gaussian distribution with mean $(\msa, \covsa)$, drawn as the red ellipsoid. The edit map $\Delta_{\ell h}$ pushes this distribution to another Gaussian distribution $\QQ_{\ell h}$ drawn as the green ellipsoid. The distribution $\QQ_{\ell h}$ has a coverage guarantee on the desirable-predicted region with probability at least $1-\gamma$. One can also verify that $\QQ_{\ell h}$ has mean $\m\opt$ and covariance matrix $(S\opt)^2$. Problem~\eqref{eq:sdp} can be solved using semidefinite programming solvers such as COPT or Mosek.

The moment information $\msa$ and $\covsa$ can be estimated from the subset $\hat{\mc D}_{\ell h}^+$. One can intuitively expect a trade-off between the tolerance level $\gamma$ and the magnitude of the headwise mapping. If $\gamma$ is lower, the activations will be edited at a higher magnitude so that the edited activations will likely end up in the desirable-predicted region of the classifier $\mathcal C_{\ell h}$. In contrast, if $\gamma$ is higher, the activations will be edited with a smaller magnitude due to the less stringent constraint to swap the predicted label.

One can view the distribution $\QQ_{\ell h} \sim (\m\opt, (S\opt)^2)$ as the counterfactual distribution of the undesirable-predicted activations with \textit{minimal} perturbation. This distribution $\QQ_{\ell h}$ is found by optimization, which is in stark contrast with the design of the counterfactual distribution in MiMic~\cite{singhrepresentation}, in which the intervention is computed based on the activations of the desirable-predicted activations. As a comparison to ITI~\cite{ref:li2024inference}, we note that the headwise intervention of ITI does \textit{not} depend on the value of the activations: ITI shifts the activations along the truthful directions for a stepsize multiplied by the standard deviation of activations along the intervention (truthful) direction. In contrast, our headwise intervention depends on the value $a_{\ell h}$, and one can verify that the magnitude of the proposed shift amounts to $\| (G\opt_{\ell h} - I) a_{\ell h} + g\opt_{\ell h} \|_2$. Moreover, ITI does not provide any (probabilistic) guarantee for the intervention, while the probabilistic guarantee is internalized in our method through the design of the mapping in equation~\eqref{eq:intervention}.

\begin{remark}
\label{remark: exp_trick}
    We observe that the two following tricks increase the empirical performance of our intervention framework. First, to avoid the collapse of $\QQ_{\ell h}$ into a Dirac distribution and to ensure the similarity between the real and the constructed covariance matrix of desirable content, we can add the constraint $S \succeq \widehat\Sigma_0^{\frac{1}{2}}$ to the optimization problem~\eqref{eq:sdp}, where $\widehat\Sigma_0$ is the empirical covariance matrix of the desirable activations $\{i: y_i^* = 0\}$. Second, to avoid taking the inverse cdf of the standard normal distribution, we use $\Gamma \leftarrow \Phi^{-1}(1-\gamma)$ and finetune $\Gamma$ instead of $\gamma$. 
\end{remark}

Finally, given the input with activation $a_{\ell}$ at layer $\ell$, suppose that $a_{\ell}$ is predicted undesirable by $\mathcal C_{\ell}$, we propose to edit the activations of \textit{only} the heads that are predicted undesirable by the headwise classifier $\mathcal C_{\ell h}$. More specifically, we edit the headwise activations $a_{\ell h}$ to a new headwise activations $\hat a_{\ell h}$ through the relationship
\begin{equation*} 
    \hat a_{\ell h} = \begin{cases}  \Delta_{\ell h}(a_{\ell h}) &\text{if } \mathcal C_{\ell h}(a_{\ell h}) \times \mathcal C_{\ell}(a_{\ell}) = 1, \\
    a_{\ell h} & \text{otherwise,}
    \end{cases}
\end{equation*}
where $\Delta_{\ell h}(a_{\ell h}) = G\opt_{\ell h} a_{\ell h} + g\opt_{\ell h}$ for $h = 1, \ldots, H$. In other words, each new headwise activation $\hat a_{\ell h}$ is computed based on three terms: the original headwise activations $a_{\ell h}$, the headwise intervention $\Delta_{h}(a_{\ell h})$, and the indicator value identifying if head $h$ and layer $\ell$ is predicted desirable or undesirable.

\section{Experiments}\label{sec:experiments}

In this section, we present empirical evidence for the effectiveness of our method RADIANT. We evaluate RADIANT on the TruthfulQA benchmark~\citet{ref:lin2021truthfulqa}, consisting of two tasks: the main task is the generation, and the secondary task is multiple choice. The generation task requires the model to generate an entire answer for each question using greedy autoregressive decoding. The accuracy and helpfulness of the answer are best assessed by humans. However, in almost all recent works in the field, including \citet{ref:li2024inference} and \citet{yin2024lofit}, this criterion is measured by an alternative large language model finetuned on the target dataset. The multiple-choice task contains candidate answers to each question, requiring the model to give probabilities. Higher probabilities for truthful answers yield higher scores. 

\subsection{Experimental Settings}
\label{sec:exp_setting}

\noindent \textbf{Datasets.} We evaluate and compare our method with other baselines using the TruthfulQA benchmark~\citet{ref:lin2021truthfulqa}. Details about this dataset and how we preprocess the data can be found in Appendix~\ref{ssec:dataset}. In addition, we also show the generalization of our method by conducting a transferability experiment on two other out-of-distribution datasets, including NQOpen~\cite{ref:kwiatkowski2019natural} and TriviaQA~\cite{joshi2017triviaqa}. Due to space constraints, the results for the latter two datasets are relegated to Appendix~\ref{sec:transfer}.

\noindent \textbf{Hyperparameters.} There are two pivotal hyperparameters in the RADIANT framework, namely $\alpha$ in probe loss~\eqref{eq:classifier}, and $\Gamma = \Phi^{-1}(1-\gamma)$ in the computation of the intervention map~\eqref{eq:sdp}. The discussion about their impact on RADIANT and how to select them is in Appendix~\ref{sec:analysis_hype}.

\noindent \textbf{Baselines.} We benchmark against:
\label{sec: baselines}
\begin{itemize}[leftmargin=5mm]
    \item Inference-time Intervention (ITI)~\cite{ref:li2024inference}, the state-of-the-art method for finetuning-free intervention. The hyperparameters of the baseline follow their original paper and their GitHub repository.\footnote{\url{https://github.com/likenneth/honest_llama/tree/master}}
    \item Few-shot prompting (FSP) introduced in \citet{bai2022training} showcases the effectiveness of 50-shot prompting in benchmark TruthfulQA. 
    \item Probe-Free Low Rank Activation Intervention (FLORAIN)~\cite{jiang2025probe} which trains a low rank component for intervening into one specified hidden representation. 
    \item Instruction Fine-Tuning (IFT, \citealt{wang2022self, chung2024scaling}) is a popular fine-tuning approach to boost the truthfulness of language models. Two notable pretrained models in this direction, namely Alpaca-7B~\cite{ref:taori2023alpaca} and Vicuna-7B~\cite{ref:chiang2023vicuna}, are adopted for comparison.
    \item Representation Intervention Fine-tuning (RIFT) methods aim to adjust language model activations for improved truthfulness. However, they add extra parameters and require extensive computational resources for fine-tuning. We consider LOFiT~\cite{yin2024lofit} for comparison.
    \item Non-Linear Inference Time Intervention (NL-ITI)~\cite{ref:hoscilowicz2024non} extends ITI by introducing a non-linear multi-token probing and multi-token intervention method.
    \item Learnable Intervention for Truthfulness Optimization (LITO)~\cite{ref:bayat2024enhanced} explores a sequence of model generations based on increasing levels of intervention magnitude and then selects the most accurate response.
\end{itemize}

\noindent \textbf{Metrics.} Following the standard benchmark in TruthfulQA~\cite{ref:lin2021truthfulqa, ref:li2024inference}, we use the below metrics:
\begin{itemize}[leftmargin=5mm]
    \item For the multiple choice task, we use MC1 and MC2 metrics as defined in \citet{ref:lin2021truthfulqa}. MC1 measures the model's accuracy in selecting the correct answer from the given choices, where selection is based on the highest log-probability score assigned to each completion. MC2 is the normalized total probability assigned to the set of true answers.
    \item For the generation task, we use two fine-tuned \texttt{GPT-3.5-instruct} models to classify whether an answer is true or false and informative or not. We report two metrics from~\citet{ref:li2024inference}: truthful score True (\%) and True*Info (\%), a product of scalar truthful and informative score. We note that there are discrepancies between the results of ITI reproduced in our work and the original results reported in~\citet{ref:li2024inference}, as the original paper used \texttt{GPT-3} based models to score these two metrics; however, at the time this paper is written, \texttt{GPT-3} is no longer available on the OpenAI platform.
    \item We assess how our method and baselines alter the original generation distribution using two extra metrics: the Kullback-Leibler (KL) divergence and Cross-Entropy (CE) loss of the model's next-token prediction distribution before and after intervention. Due to limited space in the main paper, these metrics for our method and baselines are detailed in Appendix~\ref{sec:kl_ce}.
\end{itemize}

\noindent\textbf{Computing resources.} We run all experiments on 4 NVIDIA RTX A5000 GPUs, an i9 14900K CPU, and 128GB RAM. The semidefinite programs~\eqref{eq:sdp} are solved using Mosek 10.1; the average solving time for each instance is around 50 seconds.

\noindent \textbf{Our repository:} \\
\url{https://github.com/nguyenngocbaocmt02/OT-Intervention}

\subsection{Numerical Results}

\subsubsection{Comparison between Finetuning-free Techniques}

We benchmark two fine-tuning-free baselines (ITI and FSP) along with our framework RADIANT on Llama-7B, Llama3-8B, and Llama2-chat-13B with the TruthfulQA dataset. The results are presented in the \textcolor{red}{\textbf{first three big rows}} of Table~\ref{table:combined-results}. Across the three models, the combined method of FSP + RADIANT consistently achieved the highest scores in metrics such as True * Info and True, with 49\%  for Llama-7B, 44\% for Llama3-8B, and 65\% for Llama2-chat-13B. When running alone, our method, RADIANT, also demonstrated significant improvements, particularly in Llama2-chat-13B, where it achieved a True * Info score of 64\% and a Truthful score of 74\%. This suggests the efficiency of our framework compared with other baselines, including the current state-of-the-art ITI.

\begin{table}[H]
\begin{center}
\fontsize{7.5}{10}\selectfont
\begin{tabular}{p{3mm}p{21mm}p{12mm}p{6mm}p{5mm}p{5mm}}
\toprule
\!\!Model & Methods & True * Info (\%) $\uparrow$ & True (\%) $\uparrow$ & MC1 $\uparrow$ & MC2 $\uparrow$ \\
\midrule
\multirow{10}{*}{\rotatebox{90}{\textcolor{red}{\textbf{Llama-7B}}}}
& Unintervened & 21.15 & 22.16 & 25.58 & 40.54 \\
& ITI & 26.52 & 28.03 & 27.78 & 43.59 \\
& FSP & 36.13 & 39.78 & \textbf{34.03} & \textbf{50.34} \\
& NL-ITI & 29.06 & 38.04 & 32.97 & 45.69 \\
& LITO & 39.08 & 41.22 & 29.22 & 47.64 \\
& FLORAIN & 31.46 & 34.72 & 31.76 & 47.43 \\
& RADIANT (ours) & \textbf{40.36} & \textbf{44.48} & 30.91 & 46.13 \\
\cmidrule{2-6}
& FSP + ITI & 40.63 & 45.16 & 35.50 & 52.48 \\
& FSP + NL-ITI & 45.97 & 47.31 & \textbf{38.37} & 53.61 \\
& FSP + LITO & 49.05 & 55.68 & 36.23 & 54.92 \\
& FSP + FLORAIN & 45.31 & 49.23 & 36.45 & 54.27 \\
& FSP + RADIANT (ours) & \textbf{49.31} & \textbf{57.43} & 37.97 & \textbf{55.31} \\
\midrule
\multirow{10}{*}{\rotatebox{90}{\textcolor{red}{\textbf{Llama3-8B}}}}
& Unintervened & 32.88 & 44.18 & 30.36 & 48.98 \\
& ITI & 35.92 & 46.88 & 32.07 & 49.84 \\
& FSP & 36.32 & 39.78 & \textbf{35.74} & 52.93 \\
& NL-ITI & 35.98 & 45.72 & 33.02 & 51.37 \\
& LITO & 37.53 & 48.20 & 34.96 & 52.54 \\
& FLORAIN & 36.78 & 48.67 & 34.56 & \textbf{53.68} \\
& RADIANT (ours) & \textbf{37.78} & \textbf{50.82} & 33.82 & 52.98 \\
\cmidrule{2-6}
& FSP + ITI & 40.63 & 45.16 & 35.50 & 52.98 \\
& FSP + NL-ITI & 40.70 & 46.03 & 34.15 & 53.35 \\
& FSP + LITO & 43.95 & 49.82 & \textbf{38.41} & 55.31 \\
& FSP + FLORAIN & 42.15 & 47.32 & 36.98 & \textbf{55.83} \\
& FSP + RADIANT (ours) & \textbf{44.09} & \textbf{52.02} & 37.98 & 54.61 \\
\midrule
\multirow{10}{*}{\rotatebox{90}{\textcolor{red}{\textbf{Llama2-chat-13B}}}}
& Unintervened & 51.87 & 59.86 & 35.38 & 53.32 \\
& ITI & 57.02 & 63.04 & 37.46 & 55.59 \\
& FSP & 55.97 & 58.63 & \textbf{40.76} & 57.84 \\
& NL-ITI & 57.13 & 60.82 & 39.01 & 57.24 \\
& LITO & 58.12 & 61.36 & 38.25 & 57.21 \\
& FLORAIN & 60.68 & 67.70 & 39.65 & \textbf{59.57} \\
& RADIANT (ours) & \textbf{63.68} & \textbf{74.20} & 39.95 & 58.18 \\
\cmidrule{2-6}
& FSP + ITI & 56.78 & 59.24 & 41.50 & 59.01 \\
& FSP + NL-ITI & 59.62 & 61.77 & 42.15 & 57.87 \\
& FSP + LITO & 60.74 & 63.21 & 41.28 & 58.46 \\
& FSP + FLORAIN & 61.14 & 62.45 & \textbf{44.52} & \textbf{61.48} \\
& FSP + RADIANT (ours) & \textbf{64.68} & \textbf{67.75} & 42.52 & 59.99 \\
\midrule
\multirow{3}{*}{\rotatebox{90}{\textcolor{blue}{\textbf{Alpaca}}}}
& Base & 30.39 & 30.85 & 26.56 & 41.63 \\
& + ITI & 37.67 & 38.19 & 28.89 & 45.19 \\
& + RADIANT (ours) & \textbf{44.51} & \textbf{45.94} & \textbf{30.79} & \textbf{47.83} \\
\midrule
\multirow{3}{*}{\rotatebox{90}{\textcolor{blue}{\textbf{Vicuna}}}}
& Base & 38.24 & 42.10 & 31.83 & 48.48 \\
& + ITI & 49.27 & 53.25 & 33.42 & 51.80 \\
& + RADIANT (ours) & \textbf{54.87} & \textbf{62.81} & \textbf{35.76} & \textbf{55.14} \\
\midrule
\multirow{9}{*}{\rotatebox{90}{\textcolor{green}{\textbf{Llama variants + LOFiT}}}}
& LOFiT (7B) & 59.48 & 69.03 & 51.04 & 70.78 \\
& + ITI & 60.84 & \textbf{72.29} & 51.41 & 70.84 \\
& + RADIANT (ours) & \textbf{61.50} & 72.08 & \textbf{51.80} & \textbf{71.29} \\
\cmidrule{2-6}
& LOFiT (8B) & 68.80 & 90.08 & 59.00 & \textbf{77.93} \\
& + ITI & 67.57 & 79.31 & 55.33 & 75.85 \\
& + RADIANT (ours) & \textbf{71.47} & \textbf{90.19} & \textbf{59.30} & 76.56 \\
\cmidrule{2-6}
& LOFiT (Chat-13B) & 66.35 & 81.89 & 57.04 & \textbf{76.17} \\
& + ITI & 66.00 & 78.09 & 55.08 & 75.25 \\
& + RADIANT (ours) & \textbf{69.63} & \textbf{83.86} & \textbf{57.45} & 75.47 \\
\bottomrule
\end{tabular}
\end{center}
\caption{Quantitative results of different intervention methods on TruthfulQA dataset, across different Language Models and fine-tuning approaches. Parameters of RADIANT: $\alpha = 2.5, \Gamma = 15$.}
\label{table:combined-results}
\end{table}

\subsubsection{Comparison against ITI with Instruction Finetuning Methods.}

We investigate whether implementing RADIANT on Alpaca and Vicuna, two instruction fine-tuning models from Llama-7B, can further enhance their performances. Results in Table~\ref{table:combined-results} (\textcolor{blue}{\textbf{fourth and fifth big rows}}) indicate that applying RADIANT significantly enhances both the baseline models, with Alpaca + RADIANT improved to 44.5\% in True*Info score and 46\% in Truthful score. Similarly, Vicuna + RADIANT achieved the highest scores of 55\% in True*Info score and 63\% in Truthful score, showcasing a marked increase compared to its baseline performance of 38\% and 42.1\%, respectively. In both cases, RADIANT outperformed ITI, demonstrating its effectiveness in enhancing the models' accuracy and truthfulness.

\begin{table*}[t]
\centering
\begin{small}
\begin{tabular}{lcccc}
\toprule
Methods & True * Info (\%) $\uparrow$ & True (\%) $\uparrow$ & MC1 $\uparrow$ & MC2 $\uparrow$ \\ \midrule
Unintervened                                      & 21.15          & 22.16          & 25.58          & 40.54          \\
ITI                                               & 26.52          & 28.03          & 27.78          & 43.59          \\
1st scenario: Our linear probe + ITI intervention & 26.88          & 28.00          & 29.00          & 44.00          \\
1st scenario: ITI linear probe + our intervention & 36.66          & 39.00          & 28.00          & 43.00          \\
2nd scenario: Cross entropy loss                  & 30.36          & 33.00          & 29.00          & 43.00          \\
RADIANT                                           & \textbf{40.36} & \textbf{44.48} & \textbf{30.91} & \textbf{46.13} \\ \bottomrule
\end{tabular}
\end{small}
\caption{Ablation study results: in the first scenario, we swap heads selected by RADIANT with ITI intervention, and vice versa; in the second scenario, we replace our risk-aware loss function with cross-entropy loss in training linear probe. Performed on TruthfulQA with Llama-7B.}
\label{table:ablation-study}
\end{table*}

\begin{table*}
\centering
\begin{small}
\begin{tabular}{lccc}
\toprule
Component                                  & Llama-7B & Llama3-8B & Llama2-chat-13B \\ \midrule
Train the linear probe for one layer (s)                      & 15.64    & 17.32    & 29.42          \\
Compute intervention for one head (s)      & 52.33    & 58.43    & 55.67          \\
Avg. increase in inference time per answer (\%) & 3.09     & 3.32     & 4.72           \\ \bottomrule
\end{tabular}
\end{small}
\caption{Wall-clock time breakdown by components of RADIANT for different pretrained models.}
\label{table:running-time}
\vspace{-5mm}
\end{table*}

\subsubsection{Comparison against ITI with Representation Intervention Finetuning Methods.}

We apply RADIANT and ITI on Llama-7B, Llama3-8B, and Llama2-chat-13B models, which were previously fine-tuned by LOFiT, a representation intervention finetuning method. The experimental results in the \textcolor{green}{\textbf{last big row}} of Table~\ref{table:combined-results} show that RADIANT is better than ITI in improving correctness and informativeness across different Llama models. While ITI offers modest improvements in some instances, it generally lags behind RADIANT, especially in larger models.

\subsection{Ablation Studies}

We perform two ablation studies to demonstrate the effectiveness of our framework.  Table~\ref{table:ablation-study} reports the performance of the Llama-7B + TruthfulQA dataset. In the first scenario, we select intervened heads using ITI, then compare our intervention approach versus ITI. We noticed that switching the head selection between RADIANT and ITI improved performance when the RADIANT intervention was applied, reaching 37\% in the True * Info score. In the second scenario, the probing loss function is replaced by the popular binary cross-entropy loss. This scenario tests the impact of replacing the risk-aware loss function with cross-entropy loss, which resulted in moderate improvements but still fell short compared to RADIANT's risk-aware loss in Section~\ref{sec:probe} (30.36\% vs 40.36\% in True*Info). Overall, these findings suggest that both the choice of intervention and the loss function play crucial roles in our framework.

\subsection{Computational Cost}

Our method is computationally cheap: for each head, our linear probes require one vector-vector multiplication, and our linear interventions require only one matrix-vector multiplication. To demonstrate this, we clocked the running time to calculate the intervention vectors on an A5000 GPU for the Llama-7B and Llama3-8B models and on two A5000 GPUs for Llama2-chat-13B and show the results in Table~\ref{table:running-time}. Our intervention only slightly increases the running time of the inference process. In addition to its simplicity, the preprocessing of our framework for calculating intervention vectors is much less time-consuming and resource-intensive than fine-tuning methods.

\subsection{Additional Experiments}

We perform multiple additional benchmarks to demonstrate the effectiveness of RADIANT across diverse settings; however, due to space constraints, we have included the rest of these experiments in the Appendix. More specifically:
\begin{itemize}[leftmargin=5mm]
    \item  We conduct a hyperparameter analysis and compare our risk-aware loss function with weighted negative log-likelihood (Section~\ref{sec:additional}). We also report the results for the Kullback-Leibler (KL) and Cross-Entropy (CE) metric for generation (Section~\ref{sec:kl_ce}). We find consistent improvements of RADIANT compared to other competitors. 
    \item We develop a parallel implementation (RADIANT-P) in Section~\ref{ssec:computational_cost} that significantly reduces computational overhead while maintaining the performance of the intervention.
    \item  We conduct a transferability analysis (Section~\ref{sec:transfer}) to demonstrate that RADIANT has better generalization to the NQOpen and TriviaQA datasets than ITI. 
    \item We test RADIANT on various model architectures, including Gemma and GPT models (Section~\ref{sec:other_models}), Mistral and Qwen models (Section~\ref{ssec:mistral-qwen-benchmark}), and sparse MoE architectures (Section~\ref{ssec:sparse-moe}). We also compare RADIANT with supervised fine-tuning (Section~\ref{ssec:radiant-vs-sft}). 
    \item We evaluate RADIANT on other NLP tasks, including toxicity mitigation (Section~\ref{sec:tox_exp_setting}), long fact generation (Section~\ref{sec:long-fact}), and creative writing (Section~\ref{sec:creative_writing}). RADIANT shows comparable performance to computationally intensive methods while maintaining better fluency.
\end{itemize}

\section{Conclusion} \label{sec:conclusion}

We introduced RADIANT, an activation intervention method consisting of two components: (i) a layerwise probe to detect undesirable content and (ii) headwise interventions to rectify the head activations upon undesirably predicted outcomes. Unlike existing intervention methods, which can be scattered across different layers, our intervention is focused on a single layer of the network. This focus helps alleviate the distributional shifts of the activations in subsequent layers.
Moreover, our headwise intervention aims to minimize perturbations to the activations while maintaining a reasonable guarantee of the intervention's effectiveness. This is further demonstrated in empirical results, where our method outperforms the baseline intervention methods for various LMs.

\newpage
\noindent\textbf{Limitations and Social Impact.} Our paper focuses on improving the truthfulness of LMs, and the results aim to improve trustworthy artificial intelligence. Apart from language generation, our paper can also be implemented in other domains for activation editing. However, it is important to acknowledge the potential misuse of our method. There exists a risk that adversarial actors could exploit our approach to transform truthful outputs into misleading or false information. This dual-use nature underscores the importance of ethical guidelines and safeguards in developing artificial intelligence. By promoting transparency and accountability in the use of our framework, we aim to raise awareness of the risks while maximizing the benefits of enhanced truthfulness in language generation.

\bibliography{acl.bbl}

\newpage
\appendix

\section{Additional Results on TruthfulQA Benchmark} 

\subsection{Information about TruthfulQA Dataset}\label{ssec:dataset}

The TruthfulQA dataset is a Question-Answer dataset containing 817 questions that likely elicit false answers from humans due to common misconceptions. We follow the same data-processing used in \citet{ref:li2024inference} and \citet{yin2024lofit} that splits the dataset into train/validation/test with the rate of $326/82/407$ questions and utilizes two-fold cross-validation. Each question has an average length of nine words and has two sets of desirable and undesirable answers. Following \citet{ref:li2024inference}, we separate the original dataset into 5918 question-answer pairs; each has a binary label, indicating desirability. Only pairs associated with questions in the training dataset are used to create our intervention policy, while those in the validation test are set aside for parameter tuning.

\subsection{Additional Ablation Studies} \label{sec:additional}

\subsubsection{Layer Selection Threshold with the Smooth Probing Loss}

\begin{figure}[h]
    \centering
    \begin{subfigure}{\linewidth}
        \centering
        \includegraphics[width=\linewidth]{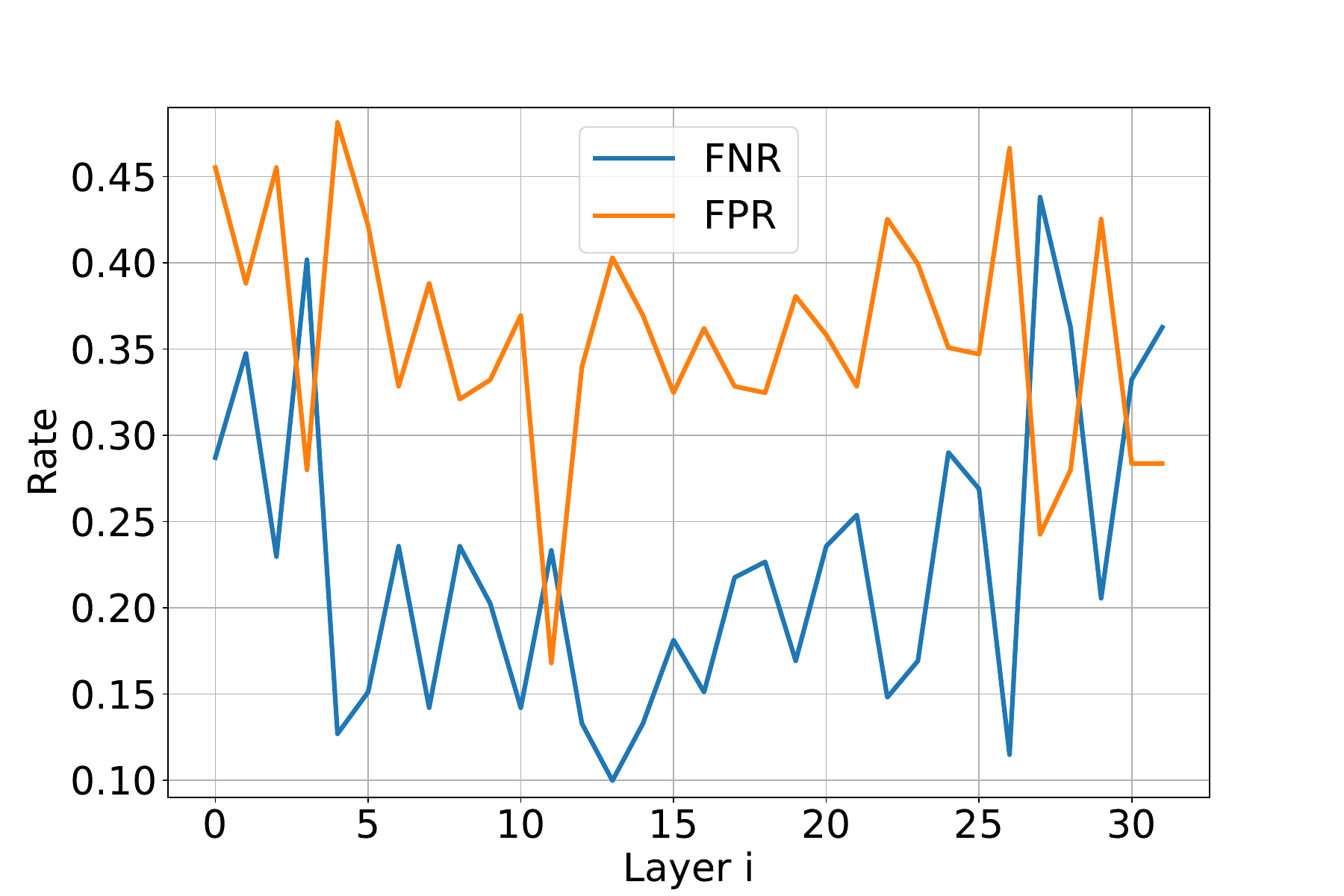}
        \caption{False Negative Rate (FNR) and False Positive Rate (FPR) across layers for intervention threshold $\tau = 11$.}
        \label{fig:fnr_fpr}
    \end{subfigure}
    \hfill
    \begin{subfigure}{\linewidth}
        \centering
        \includegraphics[width=\linewidth]{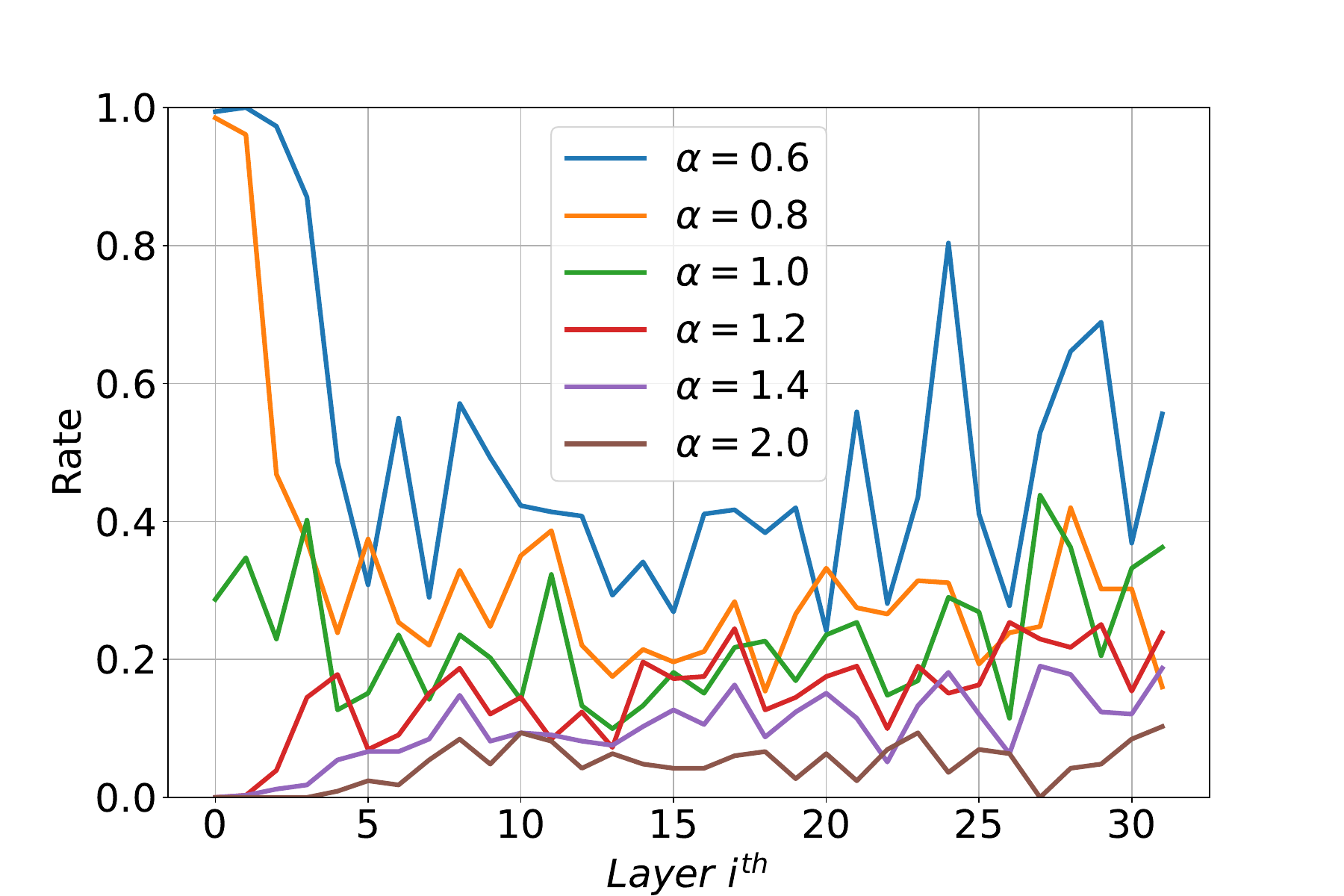}
        \caption{FNR across layers for different value of regularization parameter $\alpha$ of the risk-aware loss in~\eqref{eq:classifier}.}
        \label{fig:alpha_fnr}
    \end{subfigure}
    \caption{FNR and FPR metrics with different hyperparameters $\alpha$ across layers of Llama-7B.} 
    \label{fig:twosub}
\end{figure}

Figure~\ref{fig:twosub} presents the FNR and FPR results for the layerwise probes on Llama-7B on the TruthfulQA dataset. From Figure~\ref{fig:fnr_fpr}, one observes that the optimal layer tends to be a mid-layer ($\ell$ between 11 and 14) with smaller FNR and FPR values. Figure~\ref{fig:alpha_fnr} shows that increasing $\alpha$ will dampen the FNR rate across layers.

\subsubsection{Loss Function of the Classifier}\label{ssec:loss-function-comparison}

To evaluate the effectiveness of our risk-aware loss function in~\eqref{eq:classifier}, we conduct an ablation study that compares it with weighted negative log-likelihood (NLL) loss. The weighted NLL loss to train the classifier is defined as:
\begin{align*}
&\mathcal{L}_{\mathrm{NLL}}(\theta_{\ell h}, \vartheta_{\ell h}) \\&= - \sum_{i=1}^N [w_1 y_i^*  \log(p_i) 
+ w_0 (1 - y_i^*) \log(1 - p_i)],
\end{align*}
where $p_i = \mathrm{sigmoid}(\vartheta_{\ell h} + \theta_{\ell h}^\top a_{\ell h, i})$ is the predicted probability for sample $i$, and $w_1$ and $w_0$ represent the weights applied to positive and negative samples, respectively. These weights are chosen to align with the class-imbalance handling and error-prioritization approach in our proposed loss. Specifically, we set 
\begin{equation*}
w_0 = \frac{1}{N_0}, \quad w_1 = \frac{\alpha}{N_1}
\end{equation*}
to be consistent with the definitions of FPR and FNR in our formulation. We follow the same procedure to choose $\alpha$ as we do with RADIANT and run the experiment for three models: Llama-7B and Llama3-8B on the TruthfulQA dataset. The experimental setup is the same as in~\ref{sec:exp_setting}. The layers for intervention found by our framework with the NLL loss coincide with the layer found by our proposed loss~\eqref{eq:classifier} across all three models. We report the experimental results in Table~\ref{tab:nll_comparison_8b}.

\begin{table}[h]
\centering
\begin{small}
\begin{tabular}{p{1.4cm}p{1.4cm}p{1.2cm}p{1cm}p{1cm}}
\toprule
Methods &\textbf{True*Info (\%) $\uparrow$} & \textbf{True (\%) $\uparrow$} & \textbf{MC1 $\uparrow$} & \textbf{MC2 $\uparrow$} \\ \midrule
Unintervened & 21.15 & 22.16 & 25.58 & 40.54 \\
RADIANT (NLL) & 37.64 & 43.78 & \textbf{31.98} & 46.11 \\
RADIANT & \textbf{40.36} & \textbf{44.48} & 30.91 & \textbf{46.13} \\ \bottomrule
\end{tabular}
\end{small}
\caption{Results for Llama-7B.}
\label{tab:nll_comparison_7b}
\end{table}

\begin{table}[h]
\centering
\begin{small}
\begin{tabular}{p{1.4cm}p{1.4cm}p{1.2cm}p{1cm}p{1cm}}
\toprule
\textbf{Methods} & \textbf{True*Info (\%) $\uparrow$} & \textbf{True (\%) $\uparrow$} & \textbf{MC1 $\uparrow$} & \textbf{MC2 $\uparrow$} \\ \midrule
Unintervened & 32.88 & 44.18 & 30.36 & 48.98 \\
RADIANT (NLL) & 38.36 & 47.25 & 32.12 & 49.83 \\
RADIANT & \textbf{37.78} & \textbf{50.82} & \textbf{33.82} & \textbf{52.98} \\ \bottomrule
\end{tabular}
\end{small}
\caption{Results for Llama3-8B.}
\label{tab:nll_comparison_8b}
\end{table}

Our results show that, while the weighted NLL provides a strong baseline, RADIANT consistently outperforms it in most metrics and models. In particular, RADIANT improves performance in MC1 and MC2, demonstrating a stronger ability to guide model interventions. Although weighted NLL is a standard formulation, our findings suggest that RADIANT's approach to error weighting provides additional benefits beyond simple likelihood-based weighting.

We provide theoretical intuition for why the RADIANT loss might lead to superior performance in practical classification tasks compared to weighted NLL in our problem. The weighted NLL measures the classification confidence via the negative logarithm of predicted probabilities. Thus, large mispredictions incur exponentially large penalties:
\begin{equation*}
-\log(p_i) \xrightarrow[p_i\to0]{} \infty,\quad -\log(1-p_i)\xrightarrow[p_i\to1]{}\infty.
\end{equation*}
This exponential penalty will ensure that the boundary does not go deeper into the area of any classes because if that happens, many samples will have $p_i \to 1$, then the exponential penalty will pull it back. In our problem, the false negative samples are critical, so we expect the boundary to go deeper into the desirable area, and this loss is not suitable.
\begin{table}[h]
\centering
\begin{small}
\begin{tabular}{lcc}
\toprule
\textbf{Methods} & \textbf{Llama 7B} & \textbf{Llama3-8B} \\ \midrule
RADIANT(NLL) & 0.104 & 0.128 \\ 
RADIANT & 0.023 & 0.054 \\ \bottomrule
\end{tabular}
\end{small}
\caption{False Negative Rate comparison between RADIANT loss and Weighted NLL loss denoted as RADIANT(NLL).}
\label{tab:fnr_comparison}
\end{table}

\begin{table*}[t]
\centering
\begin{small}
\begin{tabular}{lccccccc}
\toprule
$\Gamma$ &
  True * Info (\%) $\uparrow$ &
  True (\%) $\uparrow$ &
  Info (\%) $\uparrow$ &
  \multicolumn{1}{c}{MC1 $\uparrow$} &
  \multicolumn{1}{c}{MC2 $\uparrow$} &
  \multicolumn{1}{c}{CE $\downarrow$} &
  \multicolumn{1}{c}{KL $\downarrow$} \\ \midrule
Unintervened & 21.15 & 22.16 & 95.47 & 25.58 & 40.54 & 2.13 & 0.00 \\
5            & 26.14 & 28.40 & 92.04 & 26.81 & 41.91 & 2.14 & 0.01 \\
10           & 33.04 & 36.11 & 91.49 & 27.17 & 43.11 & 2.17 & 0.04 \\
15           & 40.36 & 44.48 & 90.75 & 30.91 & 46.13 & 2.19 & 0.07 \\
20           & 36.59 & 43.46 & 84.20 & 28.15 & 44.92 & 2.29 & 0.18 \\ \bottomrule
\end{tabular}
\end{small}
\caption{The performance of RADIANT when varying $\Gamma$ and fixing $\alpha$ of 2.5.}
\label{tab:gamma}
\end{table*}

\begin{table*}[t]
\centering
\begin{small}
\begin{tabular}{lccccccc}
\toprule
$\alpha$ &
  True * Info (\%) $\uparrow$ &
  True (\%) $\uparrow$ &
  Info (\%) $\uparrow$ &
  $\overline{\text{FPR}}$ $\downarrow$ &
  $\overline{\text{FNR}}$ $\downarrow$ &
  CE $\downarrow$ &
  KL $\downarrow$ \\ \midrule
Unintervened & 21.15 & 22.16 & 95.47 & - & - & 2.13 & 0.00 \\
1.0          & 24.39 & 25.95 & 94.00 & 0.32              & 0.32              & 2.14 & 0.01 \\
1.5          & 29.07 & 31.95 & 91.00 & 0.67              & 0.11              & 2.18 & 0.05 \\
2.0          & 34.75 & 39.54 & 91.88 & 0.76              & 0.05              & 2.19 & 0.06 \\
2.5          & 40.36 & 44.48 & 90.75 & 0.78              & 0.00              & 2.19 & 0.07 \\
3.0          & 34.21 & 38.92 & 87.88 & 0.97              & 0.00              & 2.20 & 0.13 \\ \bottomrule
\end{tabular}
\end{small}
\caption{The performance of RADIANT when varying $\alpha$ and fixing $\Gamma$ of 15.}
\label{tab:alpha}
\end{table*}

Instead, RADIANT measures risk more directly (FPR, FNR), penalizing misclassification linearly in probability. Thus, while weighted NLL severely penalizes misprediction (logarithmic scale), RADIANT loss penalizes misclassification risk proportionally and is, therefore, more reasonable for our problem. Moreover, the exponential penalty potentially makes weighted NLL brittle or overly sensitive to outliers, which often exist in deep layers' activations of complex architectures like transformers. This intuition is backed up by our empirical observation that the False Negative of the weighted NLL loss is consistently higher than our loss across two models, Llama 7B and Llama3-8B.

\subsubsection{The Effect of $\Gamma$ and $\alpha$ on the Performance of RADIANT}
\label{sec:analysis_hype}
The hyperparameter $\alpha$ controls the conservativeness of the classifier in terms of the False Negative Rate. High values of $\alpha$ ensure that no undesirable content goes undetected. However, excessively large values of $\alpha$ may lead to trivial classifiers that classify all samples as undesirable. Such classifiers can be identified by checking if their False Positive Rate in the validation set is one. Therefore, for a given $\alpha$, along with other performance metrics, we report the average False Positive Rate and the average False Negative Rate across all trained classifiers on the validation set denoted $\overline{\text{FPR}}$ and $\overline{\text{FNR}}$.

In Table~\ref{tab:alpha}, we present metrics on the validation set while varying $\alpha$ within the set $\{1.0, 1.5, 2.0, 2.5, 3.0\}$. We use the base model Llama-7B. RADIANT's performance improves as $\alpha$ increases until a significant drop occurs when trivial classifiers dominate at $\alpha = 3.0$. This observation supports our approach of selecting $\alpha$ as high as possible without encountering the trivial classifiers issue. However, the information score decreases as $\alpha$ increases. This decrease can be attributed to RADIANT becoming more conservative and avoiding providing uncertain information. In practice, depending on the information sensitivity of the application of LMs, we can select $\alpha$ as a trade-off between the accuracy of the information and the informativeness. For example, LMs in the medical or legal sectors should avoid providing incorrect or uncertain information, so high values of $\alpha$ are recommended.

We report the performance metrics of Llama-7B when varying $\Gamma$ in Table~\ref{tab:gamma}. This hyperparameter decides how much RADIANT post-intervention activations deviate from the original ones if detected as undesirable. We observe that the True score of RADIANT increases in $\Gamma$. This is because the increasing value of $\Gamma$ drives activations to reside deeper inside the desirable area, thus increasing the probability of desirable generation. However, the larger value of $\Gamma$ makes activations move farther from the original value, as shown by the increase in the CE and KL metrics. The extreme deviation from the original activations leads to inconsistency in semantics. It creates more non-natural sentences, which can be observed at $\Gamma = 20$ with the drop in the Information score. Therefore, a reasonable score should balance the True and Information scores.

In our implementation, for each pre-trained model, we perform a grid search where $\alpha$ ranges over $\{1.0, 1.5, 2.0, 2.5\}$ and $\Gamma$ over $\{5, 7.5, 10, 15, 20\}$ to select the optimal combination based on the True * Info score in the validation set. After running RADIANT with various pre-trained models, we find that the combination of $\Gamma = 15$ and $\alpha = 2.5$ performs effectively in most cases. Unless otherwise specified, we utilize these values for our experiments.

%\subsection{On the Behavior of RADIANT}

\subsection{RADIANT Enhances Performance with Minimal Distribution Shift}
\label{sec:kl_ce}

Since headwise intervention applies a linear mapping to transform undesirable activations into desirable ones, it raises concerns about potential unintended shifts in meaning. To mitigate the risk of semantic drift, our intervention method is explicitly constrained via semidefinite programming to minimize the shift in activation space while enforcing a probabilistic guarantee that the modified activation crosses the classifier's decision boundary.

To showcase that the meaning shift caused by RADIANT is minimal, we report two additional metrics: Kullback-Leibler (KL) divergence of the model’s next-token prediction distribution (pre- vs. post-intervention) and Cross-Entropy (CE) loss. These metrics quantify the shift in the generation distribution following the intervention. Lower values indicate smaller deviation from the original model’s behavior, reducing the likelihood of unnatural outputs or anomalous characters. The calculation details are provided in \citet{ref:li2024inference}. Due to space constraints, these metrics were omitted from the main paper. 

We report the KL and CE values in Table~\ref{table:kl-ce}, which includes all the settings reported in Table~\ref{table:combined-results} from the main text. Our results show that RADIANT maintains comparable KL and CE values across various scenarios, demonstrating that it preserves the original distribution while significantly improving truthfulness.

\begin{table}[!h]
\begin{center}
\fontsize{7.5}{10}\selectfont
\begin{tabular}{llcc}
\toprule
Model & Methods & CE $\downarrow$ & KL $\downarrow$ \\
\midrule
\multirow{10}{*}{\rotatebox{90}{\textcolor{red}{\textbf{Llama-7B}}}}
& Unintervened & 2.13 & 0.00 \\
& ITI & 2.20 & 0.07 \\
& FSP & 2.13 & 0.00 \\
& NL-ITI & 2.19 & 0.07 \\
& LITO & 2.19 & 0.07 \\
& RADIANT (ours) & 2.19 & 0.07 \\
\cmidrule{2-4}
& FSP + ITI & 2.20 & 0.07 \\
& FSP + NL-ITI & 2.20 & 0.07 \\
& FSP + LITO & 2.20 & 0.07 \\
& FSP + RADIANT (ours) & 2.20 & 0.08 \\
\midrule
\multirow{10}{*}{\rotatebox{90}{\textcolor{red}{\textbf{Llama3-8B}}}}
& Unintervened & 2.38 & 0.00 \\
& ITI & 2.50 & 0.13 \\
& FSP & 2.38 & 0.00 \\
& NL-ITI & 2.50 & 0.13 \\
& LITO & 2.48 & 0.11 \\
& RADIANT (ours) & 2.48 & 0.08 \\
\cmidrule{2-4}
& FSP + ITI & 2.48 & 0.14 \\
& FSP + NL-ITI & 2.49 & 0.14 \\
& FSP + LITO & 2.54 & 0.17 \\
& FSP + RADIANT (ours) & 2.52 & 0.15 \\
\midrule
\multirow{10}{*}{\rotatebox{90}{\textcolor{red}{\textbf{Llama2-chat-13B}}}}
& Unintervened & 2.31 & 0.00 \\
& ITI & 2.32 & 0.17 \\
& FSP & 2.31 & 0.00 \\
& NL-ITI & 2.33 & 0.17 \\
& LITO & 2.34 & 0.18 \\
& RADIANT (ours) & 2.35 & 0.18 \\
\cmidrule{2-4}
& FSP + ITI & 2.33 & 0.13 \\
& FSP + NL-ITI & 2.34 & 0.15 \\
& FSP + LITO & 2.36 & 0.17 \\
& FSP + RADIANT (ours) & 2.38 & 0.18 \\
\midrule
\multirow{3}{*}{\rotatebox{90}{\textcolor{blue}{\textbf{Alpaca}}}}
& Base & 2.81 & 0.00 \\
& + ITI & 2.88 & 0.14 \\
& + RADIANT (ours) & 2.81 & 0.13 \\
\midrule
\multirow{3}{*}{\rotatebox{90}{\textcolor{blue}{\textbf{Vicuna}}}}
& Base & 2.67 & 0.00 \\
& + ITI & 2.77 & 0.26 \\
& + RADIANT (ours) & 2.73 & 0.27 \\
\midrule
\multirow{9}{*}{\rotatebox{90}{\textcolor{green}{\textbf{Llama variants + LOFiT}}}}
& LOFiT (7B) & 2.35 & 0.00 \\
& + ITI & 2.55 & 0.14 \\
& + RADIANT (ours) & 2.56 & 0.13 \\
\cmidrule{2-4}
& LOFiT (8B) & 3.27 & 0.00 \\
& + ITI & 3.33 & 0.08 \\
& + RADIANT (ours) & 3.38 & 0.11 \\
\cmidrule{2-4}
& LOFiT (Chat-13B) & 2.52 & 0.00 \\
& + ITI & 2.73 & 0.21 \\
& + RADIANT (ours) & 2.73 & 0.20 \\
\bottomrule
\end{tabular}
\end{center}
\caption{Quantitative results of different intervention methods on the TruthfulQA dataset, across different Language Models and fine-tuning approaches. Parameters of RADIANT: $\alpha = 2.5, \Gamma = 15$.}
\label{table:kl-ce}
\end{table}

\subsection{Computational Cost -- Paralled Version}\label{ssec:computational_cost}

This section studies the impact of ITI and RADIANT on the base models' inference speed. From the theoretical aspect, it is evident that a head intervention of ITI, which is just a vector addition, is faster than that of RADIANT, which comprises a matrix multiplication and addition operator. This observation is proved again by the empirical results shown in Table~\ref{exp:add_cost}. This table reports the average percentage increase in inference time per answer of ITI and RADIANT across the base models. It is observed that the normal version of RADIANT imposes more additional time in inference than ITI does. However, it should be noted that all RADIANT interventions are conducted on the same layer, while ITI interventions are carried out on multiple pairs of layer heads. This attribute of RADIANT allows us to parallel the interventions, which is impossible for ITI. We denote the parallel version of RADIANT as RADIANT-P and include it in Table~\ref{exp:add_cost}. RADIANT-P offers the same decent results as RADIANT but imposes less computation cost to base models than RADIANT and ITI. 

\begin{table*}[h]
\centering
\begin{small}
\begin{tabular}{lccc}
\toprule
Base models     & ITI  & RADIANT & RADIANT-P \\ \midrule
Gemma-2B        & 2.53 & 6.82    & 1.75      \\
GPT-2 Large     & 2.43 & 3.01    & 1.65      \\
Llama-7B        & 2.46 & 3.09    & 1.45      \\
Llama3-8B       & 2.51 & 3.32    & 1.55      \\
Llama2-chat-13B & 2.51 & 4.72    & 1.57      \\ \bottomrule
\end{tabular}
\end{small}
\caption{The average percentage increase in inference time per answer of ITI and RADIANT across base models.}
\label{exp:add_cost}
\end{table*}

\begin{table*}
\centering
\begin{small}
\begin{tabular}{llcccccc}
\toprule
Dataset &
  Methods &
  True * Info (\%) $\uparrow$ &
  True (\%) $\uparrow$ &
  \multicolumn{1}{c}{MC1 $\uparrow$} &
  \multicolumn{1}{c}{MC2 $\uparrow$} &
  \multicolumn{1}{c}{CE $\downarrow$} &
  \multicolumn{1}{c}{KL $\downarrow$} \\ \hline
\multirow{3}{*}{NQOpen}   & Unintervened   & 17.16          & 18.50          & 40.90          & 53.10          & 2.13 & 0.00 \\
                          & ITI            & 16.97          & 18.90          & 40.40          & 52.94          & 2.20 & 0.07 \\
                          & RADIANT (ours) & \textbf{20.66} & \textbf{22.10} & \textbf{41.50} & \textbf{54.38} & 2.16 & 0.04 \\ \hline
\multirow{3}{*}{TriviaQA} & Unintervened   & 87.82          & 92.25          & 32.60          & 64.35          & 2.13 & 0.00 \\
                          & ITI            & 91.14          & 94.20          & 32.70          & 65.16          & 2.21 & 0.09 \\
                          & RADIANT (ours) & \textbf{92.35} & \textbf{96.50} & \textbf{35.30} & \textbf{67.20} & 2.23 & 0.09 \\ \bottomrule
\end{tabular}
\end{small}
\caption{Quantitative results of the transferability of RADIANT's intervention on different datasets.}
\label{table:transferability}
\end{table*} 

\subsection{Transferability Experiments} \label{sec:transfer}

We train the intervention mappings for  Llama-7B  using the TruthfulQA dataset, but we evaluate its performance on the NQOpen dataset~\cite{ref:kwiatkowski2019natural}. The NQOpen dataset contains approximately 3600 question-answer pairs. Our intervention vectors show strong performance on the NQOpen dataset, shown in Table~\ref{table:transferability}. This effectiveness is also observed with ITI, as noted in its original paper. Nevertheless, our experiment indicates that our intervention mappings offer superior transferability and generality compared to ITI's. This experiment demonstrates RADIANT's effectiveness and highlights the generality of the computed intervention for NLP tasks.

\section{Additional Comparison Results}

\subsection{Evaluation on GPT and Gemma Base Models}
\label{sec:other_models}

In this experiment, we study the performance of finetuning-free techniques, including ITI, RADIANT, and FSP, on Gemma-2B~\cite{ref:team2024gemma} and GPT-2 Large~\cite{ref:radford2019language}, which serve as alternative base models to the Llama model family. Table~\ref{table:benchmark-others} shows that RADIANT, using few-shot prompting, outperforms other methods by a large gap. In particular, FSP + RADIANT improves the True * Info score of Gemma-2B and GPT-2 Large by 25.14\% and 16.16\%, respectively. Notably, FSP + RADIANT is superior to FSP + ITI in both True * Info and True and MC1 scores. Concurrently, RADIANT, implemented separately, outperforms ITI and FSP in terms of True * Info and True scores, while only slightly behind in MC1 and MC2.

\begin{table*}[h]
\begin{small}
\begin{subtable}[h]{\textwidth}
\centering
\begin{tabular}{lcccccc}
\toprule
Methods & True * Info (\%) $\uparrow$ & True (\%) $\uparrow$ & MC1 $\uparrow$ & MC2 $\uparrow$ & \multicolumn{1}{l}{CE $\downarrow$} & KL $\downarrow$ \\ \midrule
Unintervened  & 31.00 & 51.23 & 27.12 & 43.62 & 2.55 & 0.00 \\
ITI           & 33.42 & 54.74 & 29.14 & 46.01 & 2.64 & 0.17 \\
FSP           & 34.92 & 42.23 & \textbf{35.10} & \textbf{49.24} & 2.55 & 0.0  \\
RADIANT(ours)       & \textbf{35.62} & \textbf{59.62} & 30.34 & 48.06 & 2.62 & 0.15 \\ \midrule
FSP + ITI     & 48.83 & 61.57 & 38.27 & 54.73 & 2.69 & 0.16 \\
FSP + RADIANT(ours) & \textbf{56.14} & \textbf{64.71} & \textbf{39.54} & \textbf{56.98} & 2.65 & 0.09 \\ \bottomrule
\end{tabular}
\caption{Gemma-2B}
\end{subtable}

% \begin{table}[H]
\begin{subtable}[h]{\textwidth}
\centering
\begin{tabular}{lcccccc}
\toprule
Methods & True * Info (\%) $\uparrow$ & True (\%) $\uparrow$ & MC1 $\uparrow$ & MC2 $\uparrow$ & \multicolumn{1}{l}{CE $\downarrow$} & KL $\downarrow$ \\ \midrule
Unintervened         & 19.20           & 21.91          & 23.57          & 40.75          & 2.8  & 0.0  \\
SFT         & \textbf{35.16}           & 38.28          & \textbf{35.70}          & \textbf{53.57}          & 3.27  & 0.46  \\
ITI                  & 26.94          & 31.09          & 24.68          & 42.31 & 2.94 & 0.13 \\
FSP                  & 21.82          & 27.30          & 25.34 & 42.07          & 2.8  & 0.0  \\
RADIANT (ours)       & 30.18 & \textbf{38.73} & 25.14          & 42.14          & 2.92 & 0.12 \\ \midrule
FSP + ITI            & 29.53          & 30.45          & 25.12          & \textbf{44.79} & 2.98 & 0.18 \\
FSP + RADIANT (ours) & \textbf{35.36} & \textbf{40.41} & \textbf{26.18} & 44.29          & 2.94 & 0.16 \\ \bottomrule
\end{tabular}
\caption{GPT-2 Large}
\end{subtable}
\end{small}
\caption{Quantitative results of different intervention methods on TruthfulQA dataset, across different language models. Parameters of RADIANT: $\alpha = 2.5, \Gamma = 15$.}
\label{table:benchmark-others}
\end{table*}

\subsection{Evaluation on Mistral and Qwen Base Models}\label{ssec:mistral-qwen-benchmark}

RADIANT has already been evaluated on base models beyond Llama, including Gemma-2B and GPT-2 Large (Appendix~\ref{sec:other_models}). 
Because RADIANT relies solely on activation values (rather than logits or internal weights), it is directly applicable to any transformer-based model. 
To further validate this, we conducted an experiment on Mistral-7B-Instruct-v0.2 and Qwen2-7B-Instruct base models and evaluated the results using GPT4. The optimal intervention layers found by RADIANT are 18 and 25, respectively, for Qwen2-7B-Instruct and Mistral-7B-Instruct-v0.2.

\begin{table*}[h]
\centering
\begin{small}
\begin{tabular}{p{3cm}p{2.5cm}p{2cm}p{2cm}p{2cm}}
\toprule
\textbf{Methods} & \textbf{True * Info (\%) $\uparrow$} & \textbf{True (\%) $\uparrow$} & \textbf{MC1 $\uparrow$} & \textbf{MC2 $\uparrow$} \\ \midrule
Unintervened & 80.05 & 67.20 & \textbf{55.69} & 70.93 \\
ITI & 76.87 & 62.30 & 54.23 & 70.41 \\
RADIANT (ours) & \textbf{81.94} & \textbf{68.17} & 55.23 & \textbf{71.75} \\ \bottomrule
\end{tabular}
\end{small}
\caption{Experimental results of baselines for Mistral-7B-Instruct-v0.2.}
\label{tab:mistral_results}
\vspace{-5mm}
\end{table*}

\begin{table*}[h]
\centering
\begin{small}
\begin{tabular}{p{3cm}p{2.5cm}p{2cm}p{2cm}p{2cm}}
\toprule
\textbf{Methods} & \textbf{True * Info (\%) $\uparrow$} & \textbf{True (\%) $\uparrow$} & \textbf{MC1 $\uparrow$} & \textbf{MC2 $\uparrow$} \\ \midrule
Unintervened & 64.13 & 50.55 & 40.02 & 59.89 \\
ITI & 73.68 & 50.30 & \textbf{40.88} & 60.86 \\
RADIANT (ours) & \textbf{77.51} & \textbf{53.05} & 40.39 & \textbf{61.17} \\ \bottomrule
\end{tabular}
\end{small}
\caption{Experimental results of baselines for Qwen2-7B-Instruct.}
\label{tab:qwen_results}
\end{table*}

Tables~\ref{tab:mistral_results} and~\ref{tab:qwen_results} show that RADIANT improves truthfulness and overall performance across different models. For Mistral-7B-Instruct-v0.2, RADIANT achieves the highest True * Info (81.94\%) and True (68.17\%) scores, outperforming both the baseline and ITI. For Qwen2-7B-Instruct, RADIANT leads in True * Info (77.51\%) and True (53.05\%).

Regarding comparisons with knowledge editing methods such as WISE~\cite{wang2024wise} and ConceptEdit~\cite{wang2023editing}, we emphasize that our work targets a different setting: inference-time intervention rather than parameter-based model editing. While model editing methods aim to alter stored knowledge via weight updates, RADIANT operates at inference time by activation intervention, enabling lightweight, reversible control over generation. 

\subsection{Evaluation on Sparse MoE Architectures}\label{ssec:sparse-moe}

We show that RADIANT can be applied to a sparse Mixture of Experts. The core mechanism of RADIANT, layerwise and headwise activation editing, relies solely on access to activations at a given layer and does not depend on weight sharing or architectural uniformity. Moreover, since our method operates directly on attention head activations, and most sparse MoE designs apply sparsity to MLP layers rather than attention blocks, the architecture-level sparsity introduced by MoEs does not interfere with our intervention method. To verify our performance on this MoE architecture, we conduct experiments with a sparse MoE model, OLMoE-1B-7B-0924, from \citet{muennighoff2024olmoe}. We provide the results of our method, ITI, and base model for the TruthfulQA dataset as follows.

Table~\ref{tab:moe_results} shows that RADIANT works well for the OLMoE-1B-7B-0924 model, boosting the True score by nearly 8\%. RADIANT outperforms ITI regarding True * Info, True, MC1, and MC2. This showcases its effectiveness in generating truthful answers in architectures like Sparse MoEs.

\begin{table*}[!ht]
\centering
\begin{small}
\begin{tabular}{lcccccc}
\toprule
\textbf{Methods} & \textbf{True * Info (\%) $\uparrow$} & \textbf{True (\%) $\uparrow$} & \textbf{MC1 $\uparrow$} & \textbf{MC2 $\uparrow$} & \textbf{CE $\downarrow$} & \textbf{KL $\downarrow$} \\ \midrule
Unintervened & 23.43 & 31.42 & 28.92 & 45.42 & 2.67 & 0.00 \\
ITI & 25.52 & 32.74 & 28.48 & 45.50 & 2.84 & 0.13 \\
RADIANT (ours) & \textbf{28.56} & \textbf{39.17} & \textbf{30.13} & \textbf{47.05} & 2.68 & 0.12 \\ \bottomrule
\end{tabular}
\end{small}
\caption{Performance on TruthfulQA with OLMoE-1B-7B-0924.}
\label{tab:moe_results}
\end{table*}

\subsection{Comparison with Supervised Fine-Tuning}\label{ssec:radiant-vs-sft}

Supervised fine-tuning (SFT) attempts to align LLMs with human preferences~\citep{ouyang2022training}. Given a prompt, SFT encourages the model to generate desirable answers and reduce the likelihood of generating undesirable answers by optimizing the cross-entropy loss. However, SFT's requirement to fine-tune all LLM parameters demands substantial GPU resources for the back-propagation operations. Due to computational constraints, we can only perform SFT on the GPT2-large, the smallest model in our experiments. 

\begin{table}[h]
\centering
\begin{tabular}{{|c|c|}}
\hline
\textbf{Parameter}    & \textbf{Value}          \\ \hline
\texttt{learning\_rate}    & 0.00002                \\ \hline
\texttt{weight\_decay}    & 0                     \\ \hline
\texttt{adam\_beta1}      & 0.8                   \\ \hline
\texttt{adam\_beta2}      & 0.999                 \\ \hline
\texttt{adam\_epsilon}    & $1 \times 10^{-8}$    \\ \hline
\texttt{max\_grad\_norm}  & 1                     \\ \hline
\texttt{batch\_size}  & 32                     \\ \hline
\texttt{epochs\_num}  & 5                     \\ \hline
\texttt{lr\_scheduler\_type}  & linear \\ \hline
\end{tabular}
\caption{Parameter values for SFT.}
\label{tab:parameters}
\end{table}

Table~\ref{table:benchmark-others} highlights the advantages of inference-time methods like ours: by avoiding gradient computation or backpropagation, they offer a lightweight, fast, versatile, and economical way to improve the performance of LLMs. This is especially useful in low-resource scenarios. Because Llama-7B is used as a base model for many of our experiments, we also include the results of SFT on Llama-7B for comparison, but it is worth noting that the number is taken from the ITI paper~\cite{ref:li2024inference}. Since our evaluation framework differs from ITI in terms of the GPT-judge and GPT-info models, which is attributed to the fact that these models in the ITI paper are no longer available on OpenAI, the results may not be fair for comparison. From Table~\ref{sec:other_models}, SFT achieves the best performance in terms of MC metrics and reaches a high score of True * Info and True. Regarding the True score, RADIANT still outperforms SFT in the individual and integrating versions with FSP, offering 38.73\% and 40.41\% correct answers, respectively. When combined with FSP, RADIANT achieves 35.36\% in True * Info score, surpassing SFT but requiring fewer resources. For the implementation of SFT, we use the SFTTrainer framework\footnote{\url{https://huggingface.co/docs/trl/en/sft_trainer}}, one of the most popular frameworks for this algorithm. While we remained almost the default parameters proposed by the library, we had to tune many important parameters like learning rate, parameters of the Adam optimizer, weight decay, and so on, to get a consistent and stable fine-tuned model. Some important parameters for SFT are reported in Table~\ref{tab:parameters}, while its best performance is shown previously in Table~\ref{table:benchmark-others}. This observation strongly supports the practicability of RADIANT, which only necessitates tuning two key hyperparameters $\alpha$ in the probe loss~\eqref{eq:classifier}, and $\Gamma = \Phi^{-1}(1-\gamma)$ in the computation of the intervention map~\eqref{eq:sdp}. A detailed analysis of these parameters to provide insight into their impact is presented in Appendix~\ref{sec:analysis_hype}. This section offers useful insights and guidelines for selecting values for any new models. Furthermore, compared to other methods like ITI, the grid search on two hyperparameters like ours is efficient and reasonable, so it is not harder to tune the hyperparameters of RADIANT than other previous works.

\section{Experiments on Other NLP Tasks}

\subsection{Toxicity Mitigation Benchmark} \label{sec:tox_exp_setting}

In this section, we show the performance of RADIANT in mitigating toxicity in long-form text generation. In this task, the language models are required to complete an incomplete prefix piece of a text. Normally, the prefix prompt is selected to elicit toxic content from LLMs. For a fair comparison to previous works, we set up experiments following \citet{singhrepresentation} and \citet{pozzobon2023goodtriever}, which is detailed below.

\noindent \textbf{Training dataset.} We use the Toxic Comments Classification Challenge data.\footnote{\url{https://www.kaggle.com/c/jigsaw-toxic-comment-classification-challenge}} The dataset comprises sentences and their human toxicity labels. We follow the data preprocessing steps from~\citet{singhrepresentation} while the activations gathering is identical to the procedure of the QA task.

\noindent \textbf{Models.} Following existing works in the field, we adopt the GPT2-Large as the base model across all experiments of the toxicity mitigation task.

\noindent \textbf{Hyperparameter} As mentioned in the QA task section, there are two important hyperparameters in our framework, namely $\alpha$, and $\Gamma = \Phi^{-1}(1-\gamma)$, which would be selected by a grid search procedure detailed in Appendix~\ref{sec:analysis_hype}.

\noindent \textbf{Baselines.} We include several baselines that have the same goal of reducing the toxicity of LLMs, including MIMIC~\cite{singhrepresentation}, DEXPERTS~\cite{liu2021dexperts}, DAPT~\cite{gururangan2020don}, UDDIA~\cite{yang2022unified}, PPLM~\cite{dathathri2019plug}, GOODTRIEVER~\cite{pozzobon2023goodtriever}. As for MIMIC, we consider two versions: Mean Matching (MM) and Mean+Covariance Matching (MCM), introduced in their original paper.

\noindent \textbf{Metrics.} We assess the performance using three key metrics: toxicity, fluency, and diversity.
\begin{table}[!h]
\centering
\begin{small}
\begin{tabular}{ll}
\toprule
\textbf{Hyperparameter}       & \textbf{Value} \\ \midrule
Number of Samples             & 25                  \\
Max Length                    & 20                  \\
Temperature                   & 1                   \\
Top-p (sampling)              & 0.9                 \\
Top-k (sampling)              & 0             \\ \bottomrule
\end{tabular}
\end{small}
\caption{Hyperparameter settings for the decoding mechanism in the toxicity mitigation task}
\label{table:hyperparameter-settings}
\end{table}
\begin{enumerate}[label=(\roman*), leftmargin=5mm]
    \item Toxicity: we use the non-toxic split of RealToxicityPrompts \cite{gehman2020realtoxicityprompts} and utilize the evaluation framework in \citet{liu2021dexperts} and \citet{singhrepresentation}. For each prompt in the dataset, the models generate 25 outputs, each capped at 20 tokens in length. The parameters of the shared decoding mechanism of all algorithms are presented in Table~\ref{table:hyperparameter-settings}. These outputs are analyzed using Perspective API,\footnote{\url{https://perspectiveapi.com/}} which estimates the likelihood that a human would perceive the text as toxic. Two metrics are derived:
\begin{itemize}[leftmargin=5mm]
\item {Expected Maximum Toxicity is denoted as Exp.~Max.~Tox. We identify the output with the highest toxicity score for every prompt and compute the average of these maximum scores across all prompts.}
\item {Toxic Completion Proportion is abbreviated as Tox.~Prob. This metric tracks the fraction of outputs considered toxic, where toxicity is defined as a score above 0.5 based on the Perspective API's threshold.}
\end{itemize}

\item Fluency is evaluated by calculating the perplexity of the generated outputs, using GPT-2 (XL) as a reference model. Lower perplexity values suggest that the text is more coherent and grammatically fluent. 

\item Diversity is assessed by examining the ratio of unique n-grams (1-gram, 2-gram, and 3-gram) to the total number of tokens in the generated text. This metric captures the range of variation in the outputs, with higher values indicating more diverse and varied language use. This methodology ensures a balanced evaluation, providing insights into the ability of models for non-toxic, fluent, and diverse generation.
\end{enumerate}

\noindent \textbf{Results.} The experimental results of the baselines are shown in Table~\ref{table:benchmark-toxic}, where the base model for all methods is GPT-2 Large. The result of the original model is described in the first row. We divide the baselines into two groups. Using an extensive fine-tuning procedure, the first group comprises DAPT, GeDI, PPLM, UDDIA, DExperts, and GOODTRIEVER. In contrast, the second group contains inference-time fine-tuning-free methods like MIMIC, ITI, and RADIANT. The baselines in the first group are better than their counterparts in the second group regarding toxicity metrics. However, these methods require fine-tuning or computing gradients at inference time, which can be computationally intensive. MIMIC, ITI, and RADIANT achieved a toxicity reduction comparable to many algorithms in the first group, but consumed much fewer resources. Specifically, RADIANT is superior to PPLM and is equally competitive to DAPT. In particular, RADIANT offers the best toxicity reduction impact within the second group compared to ITI and MIMIC while maintaining a better fluency and diversity of generated sentences. The fluency of RADIANT is even more favored than almost all algorithms in the first group, except for UDDIA. At the same time, its diversity metric is better than that of other baselines except for PPLM.

\begin{table*}[t]
\centering
\begin{small}
\begin{tabular}{lcccccc}
\toprule
Model   & Exp. Max. Tox. $\downarrow$ & Tox. Prob. $\downarrow$ & Fluency $\downarrow$ & 1-gram $\uparrow$ & 2-gram $\uparrow$ & 3-gram $\uparrow$ \\ \midrule
GPT-2 (large)                    & 0.39          & 0.25          & 24.66 & 0.58          & 0.85          & 0.85          \\ \midrule
DAPT                             & 0.27          & 0.09          & 30.27 & 0.57          & 0.84          & 0.84          \\
GeDI                             & 0.24          & 0.06          & 48.12 & \textbf{0.62}          & 0.84          & 0.83          \\
PPLM (10\%)                      & 0.38          & 0.24          & 32.58 & 0.58          & \textbf{0.86}          & \textbf{0.86}          \\
UDDIA                            & 0.24          & 0.04          & \textbf{26.83} & 0.51          & 0.80          & 0.83          \\
DExperts         & \textbf{0.21} & \textbf{0.02} & 27.15 & 0.56          & 0.84          & 0.84          \\
GOODTRIEVER                      & 0.22          & 0.04          & 27.11 & 0.58          & 0.82          & 0.83          \\ \midrule
MM (MIMIC)            & 0.33          & 0.16          & 28.00 & \textbf{0.58} & \textbf{0.85} & \textbf{0.85} \\
MCM (MIMIC) & 0.29          & \textbf{0.09} & 30.70 & 0.54          & 0.84          & 0.84          \\
ITI                              & 0.31          & 0.12          & 33.12 & 0.57          & \textbf{0.85} & \textbf{0.85} \\
RADIANT & \textbf{0.27}               & \textbf{0.09}           & \textbf{27.10}       & \textbf{0.58}     & \textbf{0.85}     & \textbf{0.85}\\ \bottomrule    
\end{tabular}
\caption{Quantitative results of different intervention methods on RealToxicityPrompts dataset. Parameters of RADIANT: $\alpha = 2.5, \Gamma = 15$.}
\label{table:benchmark-toxic}
\end{small}
\end{table*}

\subsection{Long-Fact Generation Benchmark} \label{sec:long-fact}

We inspected the performance of our method and ITI on the Long-Fact task. We followed the experimental setup and prompt format in \citet{cheng2024integrative} and used their code repository\footnote{\url{https://github.com/YiCheng98/IntegrativeDecoding}} for evaluation. This task requires models to generate detailed document-length descriptions of queried objects, often exceeding 1,000 tokens. Evaluation involves breaking responses into atomic facts using LLaMA3.1-70B-Instruct and evaluating their truthfulness with GPT-4. Metrics include Precision (truthful fact proportion), Recall@128 (truthful facts per 128), and F1@128. Results are based on 120 samples. Because we do not have a set of true and wrong samples in this dataset, we cannot learn the mapping for ITI and RADIANT. Therefore, we transfer the learned mappings of ITI and RADIANT from Truthful QA for Qwen2-7B-Instruct and Mistral-7B-Instruct-v0.2. 

The results for Qwen2-7B-Instruct are reported in Table~\ref{tab:long_fact_qwen}, while the results for Mistral-7B-Instruct-v0.2 are reported in Table~\ref{tab:long_fact_mistral}.

The results demonstrate that RADIANT's intervention mapping effectively enhances truthfulness in the long-fact task, showcasing strong generalization across models. In contrast, ITI's mapping transfers well to Qwen2-7B-Instruct but performs poorly on Mistral-7B-Instruct-v0.2, highlighting its limited adaptability. RADIANT achieves the highest precision across both models, indicating its strength in generating truthful facts. While ITI slightly outperforms in recall for Qwen2-7B, and the unintervened model leads in recall for Mistral-7B, RADIANT maintains a strong balance between precision and recall. This is reflected in its highest F1@128 scores, making it the most effective overall. The small variations in recall suggest that interventions refine the accuracy of the facts rather than significantly increasing the number of truthful facts generated.

\subsection{Creative Writing Benchmark} \label{sec:creative_writing}

To further address the concern about meaning shift, we tested our method on the Creative Writing v3 task from the EQ-Bench benchmark~\cite{paech2023eq} using Qwen2-7B-Instruct. This experiment evaluates how intervention mapping affects the meaning of creative text. The ELO scores in Table~\ref{tab:creative_writing} show a slight decrease when applying ITI and RADIANT, which is expected, since these interventions enhance truthfulness, potentially limiting word choice flexibility. However, the results indicate that the interventions do not significantly harm the model's creative capabilities.

\begin{table*}
\centering
\begin{small}
\begin{tabular}{lc}
\toprule
\textbf{Methods} & \textbf{ELO $\uparrow$} \\ \midrule
Unintervened & 592.5 \\
ITI & 554.6 \\
RADIANT (ours) & \textbf{578.6} \\ \bottomrule
\end{tabular}
\end{small}
\caption{ELO Scores on Creative Writing v3 Task.}
\label{tab:creative_writing}
\end{table*}

\begin{table*}[!htbp]
\centering
\begin{small}
\begin{tabular}{lccc}
\toprule
\textbf{Methods} & \textbf{Precision $\uparrow$} & \textbf{Recall@128 $\uparrow$} & \textbf{F1@128 $\uparrow$} \\ \midrule
Unintervened & 87.54 & 55.42 & 69.02 \\
ITI & 88.01 & \textbf{56.73} & 68.50 \\
RADIANT (ours) & \textbf{88.52} & 56.17 & \textbf{70.13} \\ \bottomrule
\end{tabular}
\end{small}
\caption{Performance on the Long-Fact task with Qwen2-7B-Instruct.}
\label{tab:long_fact_qwen}
\end{table*}

\begin{table*}[!htbp]
\centering
\begin{small}
\begin{tabular}{lccc}
\toprule
\textbf{Methods} & \textbf{Precision $\uparrow$} & \textbf{Recall@128 $\uparrow$} & \textbf{F1@128 $\uparrow$} \\ \midrule
Unintervened & 88.18 & \textbf{59.23} & 72.15 \\
ITI & 87.82 & 58.73 & 72.20 \\
RADIANT (ours) & \textbf{89.03} & 59.22 & \textbf{73.73} \\ \bottomrule
\end{tabular}
\end{small}
\caption{Performance on the Long-Fact task with Mistral-7B-Instruct-v0.2.}
\label{tab:long_fact_mistral}
\end{table*}

\section{Mathematical Proof} \label{sec:proof}

\begin{proof}[Proof of Theorem~\ref{thm:intervene}] The logistic classifier $\mathcal C_{\ell h}$ output a prediction 0 if $\vartheta_{\ell h} + \theta_{\ell h}^\top a_{\ell h} < 0$. If $\QQ_{\ell h}$ is Gaussian $\mc N(\m, \cov)$, then by~\cite[Theorem~10.4.1]{ref:prekopa1995stochastic}, the probability constraint of~\eqref{eq:intervention} can be written as
\[
 \vartheta_{\ell h} + \theta_{\ell h}^\top \mu + \Phi^{-1}(1-\gamma) \sqrt{\theta_{\ell h}^\top \cov \theta_{\ell h}} \le 0.
\]
Next, we add an auxiliary variable $t \in \R_+$ with an epigraph constraint
$\sqrt{\theta_{\ell h}^\top \cov \theta_{\ell h}} \le t$. Because $\Phi^{-1}(1-\gamma) > 0$ for $\gamma \in (0, 0.5)$, problem~\eqref{eq:intervention} is equivalent to
\begin{equation} \notag
    \begin{array}{cl}
         \min & \| \m - \msa \|_2^2 + \| \cov^\half - \covsa^{\half} \|_F^2 \\
         \st & \vartheta_{\ell h} + \theta_{\ell h}^\top \mu + \Phi^{-1}(1-\gamma) t \le 0, \\ &\sqrt{\theta_{\ell h}^\top \cov \theta_{\ell h}} \le t \\
         & \mu \in \R^d,~\cov \in \PSD^d,~t \in \R_+.
    \end{array}
\end{equation}
Let $S \leftarrow \cov^{\half} \in \PSD^d$, the constraint $\sqrt{\theta_{\ell h}^\top \cov \theta_{\ell h}} \le t$ is equivalent to $\| S \theta_{\ell h} \|_2 \le t$, which leads to~\eqref{eq:sdp}. Thus, the optimal pushforward $\Delta_{\ell h}$ should push $\Pnom_{\ell h} \sim \mc N(\msa, \covsa)$ to $\QQ_{\ell h} \sim \mc N(\m\opt, (S\opt)^2)$. One can verify through simple linear algebraic calculations that the mapping $\Delta_{\ell h}(a_{\ell h}) = G\opt_{\ell h} a_{\ell h} + g\opt_{\ell h}$ defined in the theorem statement is the desired mapping. This completes the proof. 
\end{proof}

\end{document}